\newtheorem{prop}{Proposition}
\DeclareMathOperator*{\argmin}{arg\,min}
\newcommand{\bfsection}[1]{\vspace*{0.1cm}\noindent\textbf{#1.}}
\def\resp{\emph{resp. }}
\def\eg{\emph{e.g. }}
\def\ie{\emph{i.e. }}
\def\wrt{\emph{w.r.t. }}
\def\etal{\emph{et al. }}
\begin{document}
\title{Revisiting 2D Convolutional Neural Networks for Graph-based Applications}

\author{Yecheng~Lyu,~\IEEEmembership{Student Member,~IEEE,}
        Xinming~Huang,~\IEEEmembership{Senior Member,~IEEE,}
        and~Ziming~Zhang,~\IEEEmembership{Member,~IEEE}
\IEEEcompsocitemizethanks{\IEEEcompsocthanksitem Yecheng Lyu, Dr. Xinming Huang and Dr. Ziming Zhang are with the Department of Electrical 
and Computer Engineering, Worcester Polytechnic Institute, Worcester, MA 01609, USA. 
\newline Email: \{ylyu,xhuang,zzhang15\}@wpi.edu

\IEEEcompsocthanksitem Part of this work was done when Lyu and Zhang worked at Mitsubishi Electric Research Laboratories (MERL).
}
}

\markboth{IEEE Transaction on Pattern Analysis and Machine Intelligence}{Lyu \etal}

\IEEEtitleabstractindextext{%
\begin{abstract}
Graph convolutional networks (GCNs) are widely used in graph-based applications such as graph classification and segmentation. However, current GCNs have limitations on implementation such as network architectures due to their irregular inputs. In contrast, convolutional neural networks (CNNs) are capable of extracting rich features from large-scale input data, but they do not support general graph inputs. To bridge the gap between GCNs and CNNs, in this paper we study the problem of how to effectively and efficiently map general graphs to 2D grids that CNNs can be directly applied to, while preserving graph topology as much as possible. We therefore propose two novel graph-to-grid mapping schemes, namely, {\em graph-preserving grid layout (GPGL)} and its extension {\em Hierarchical GPGL (H-GPGL)} for computational efficiency. We formulate the GPGL problem as integer programming and further propose an approximate yet efficient solver based on a penalized Kamada-Kawai method, a well-known optimization algorithm in 2D graph drawing. We propose a novel vertex separation penalty that encourages graph vertices to lay on the grid without any overlap. Along with this image representation, even extra 2D maxpooling layers contribute to the PointNet, a widely applied point-based neural network. We demonstrate the empirical success of GPGL on general graph classification with small graphs and H-GPGL on 3D point cloud segmentation with large graphs, based on 2D CNNs including VGG16, ResNet50 and multi-scale maxout (MSM) CNN. 
\end{abstract}

\begin{IEEEkeywords}
graph neural network, convolutional neural network, graph classification, 3D point cloud segmentation
\end{IEEEkeywords}}

\maketitle

\section{Introduction}\label{sec:introduction}

\IEEEPARstart{G}{raph} data processing using neural networks has been broadly attracting more and more research interests recently. Graph convolutional networks (GCNs) \cite{defferrard2016convolutional, kipf2016semi, hamilton2017inductive, bronstein2017geometric, chen2018fastgcn, gao2019graph, wu2019simplifying, wu2020comprehensive, morris2019weisfeiler},\cite{zhao2018work,jiang2019gaussian,xuan2019subgraph,niepert2016learning,al2019ddgk,gao2019hGANet,xu2018powerful,ivanov2018anonymous,chen2019dagcn,tixier2019graph,xinyi2018capsule,zhang2019quantum,jin2018discriminative,zhao2019learning,ma2019graph,knyazev2018spectral,jia2019graph,yanardag2015deep,neumann2016propagation},\cite{kriege2016valid,atamna2019spi,corcoran2020function,verma2018graph},\cite{togninalli2019wasserstein,li2019semi,kondor2016multiscale},\cite{lyu2021treernn,wang2020second,hu2020going, jaume2019edgnn} are a family of graph-based neural networks that extend convolutional neural networks (CNNs) to extract local features in general graphs with irregular input structures. The irregularity of a graph, including the orderless nodes and connections, however, makes the GCNs difficult to design as well as training from local patterns. In general, a GCN has two key operations to compute feature representations for the nodes in a graph, namely aggregation and transformation. That is, the feature representation of a node is computed as an aggregate of the feature representations of its neighbors before it is transformed by applying the weights and activation functions. To deal with graph irregularity the adjacency matrix is fed into the aggregation function to encode the topology. The weights are shared by all the nodes to perform convolutions to their neighborhood nodes.

CNNs, equipped with tens or hundreds of layers and millions of parameters thanks to their well designed operators upon 2D grid inputs, succeed in many research areas such as speech recognition \cite{abdel2014convolutional} and computer vision \cite{he2016deep}. In the grid inputs such as images, the highly ordered adjacency and unique layout bring the ability of designing accurate, efficient and salable convolutional kernels and pooling operations over large-scale data. How to apply CNNs to general graph inputs, however, still remains elusive.

\begin{figure}[t]
	\centering
	\includegraphics[width=\columnwidth]{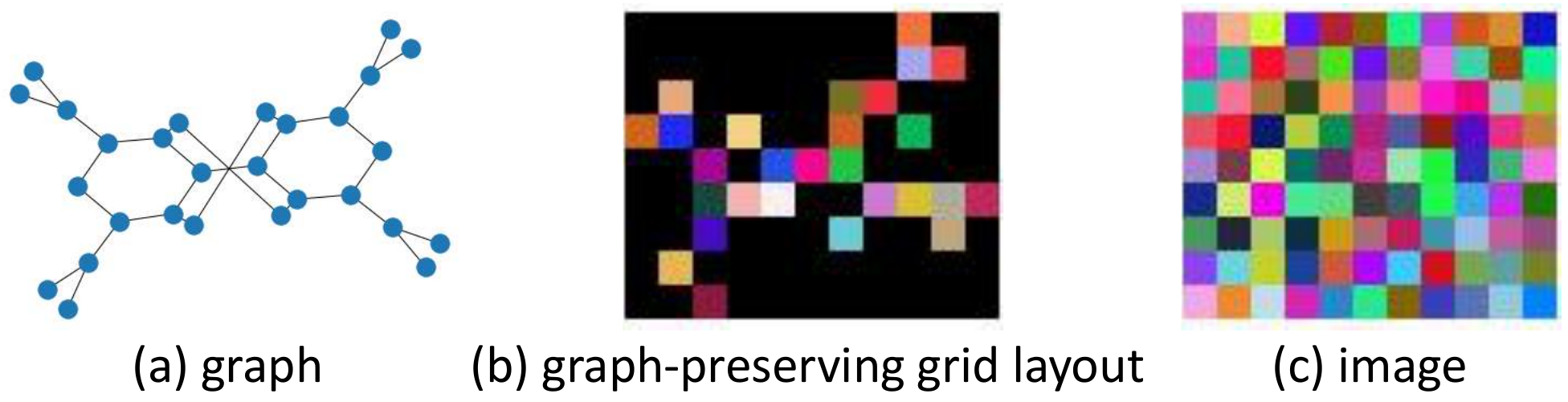}
	\caption{\footnotesize Illustration of differences between {\bf (a)} a graph, {\bf (b)} a graph-preserving grid layout (GPGL) of the graph, and {\bf (c)} an image. The black color in (b) denotes no correspondence to any vertex in (a), and other colors denote non-zero features on the grid vertices. 
	}
	\label{fig:overview}
	\vspace{-3mm}
\end{figure}

\bfsection{Motivation}
The key differences between graph convolution and 2D spatial convolution make CNNs much easier and richer in deep architectures than GCNs as well as being trained much more efficiently. Most of the previous works in the literature such as \cite{morris2019weisfeiler, niepert2016learning, ivanov2018anonymous} concentrate on handling the convolution weights based on the node connections, leading to high computation. Intuitively there also exists another way to process each graph by projecting the graph nodes onto a 2D grid so that CNNs can be employed directly. 
Such a method can easily benefit from CNNs in terms of network design and training in large scale. However, there are few existing works to explore this type of approach. As graph topology can be richer and more flexible than grid for encoding important messages of the graph, how to preserve graph topology on the grid becomes a key challenge in algorithm development.

Therefore, in order to bridge the gap between GCNs and CNNs, in contrast to previous works on generalizing the basic operations in CNNs to graph inputs, in this paper we mainly focus on studying the problem of \textbf{how to use CNNs as backbone for graph-based applications effectively and efficiently}. We then propose a principled method for projecting undirected graphs onto the 2D grid with graph topology preservation.

In fact, the visualization of graphs in 2D space has been well studied in {\em graph drawing}, an area of mathematics and computer sciences whose goal is to present the nodes and edges of a graph on a plane with some specific properties (\eg minimizing edge crossings \cite{chrobak1995linear,schnyder1990embedding}, minimizing the graph-node distance between graph domain and 2D domain \cite{kamada1989algorithm,kobourov2012spring}, showing possible clusters among the nodes \cite{frishman2007multi}). In the literature, the {\em Kamada-Kawai (KK)} algorithm \cite{kamada1989algorithm} is one of the most widely-used undirected graph visualization techniques. In general, the KK algorithm defines an objective function that measures the energy of each graph layout \wrt some theoretical graph distance, and searches for the (local) minimum that gives a reasonably good 2D visualization of the graph regardless the distances among the nodes. 

As described above, we can see that graph drawing algorithms may play an important role in connecting graph applications with CNNs for geometric deep learning (GDL), in general. To the best of our knowledge, however, such graph drawing algorithms have never been explored for GDL. One possible reason is that graph drawing algorithms often work in continuous spaces, while our case requires {\em discrete} spaces (\ie grid) where CNNs can be deployed. Overall, how to project graphs onto the grid with topology preservation for GDL is still elusive in the literature.

\bfsection{Contributions}
To address the problem above, in this paper we propose a novel {\em graph-preserving grid layout} (GPGL), an integer programming problem that minimizes the topological loss on the 2D grid so that CNNs can be used for GDL on undirected graphs. Technically solving such a problem is very challenging because potentially one needs to solve a highly nonconvex optimization problem in a discrete space. We manage to do so effectively by proposing a penalized KK method with a novel vertex separation penalty, followed by the rounding technique. As a result, our GPGL algorithm can approximately preserve the irregular structural information in a graph on the regular grid as graph layout, as illustrated in Fig. \ref{fig:overview}. To further improve the computational efficiency of GPGL, we also propose a hierarchical GPGL (H-GPGL) algorithm as an extension to handle large graphs.

In summary, our key contributions of this paper are as follows:
\setlist[itemize]{leftmargin=*}
\begin{itemize}
	\item We are the {\em first}, to the best of our knowledge, to explicitly explore the usage of graph drawing algorithms in the context of GDL, and accordingly propose a novel GPGL algorithm and its variance H-GPGL to project graphs onto the 2D grid with minimum loss in topological information.
	\item We demonstrate the empirical success of GPGL on graph classification with small graphs, and H-GPGL on 3D point cloud segmentation with large graphs. In the experiment, we clearly shows that PointNet with 2D max-pooling layers and 2D CNNs including VGG16, ResNet50 and multi-scale maxout(MSM) CNN benefit from the GPGL image representation and gain a large improvement over PointNet, a point-wised neural network.
\end{itemize}

\section{Related Work}
\bfsection{Graph Drawing \& Network Embedding}
Graph drawing can be considered as a subdiscipline of network embedding \cite{hamilton2017representation, cui2018survey, cai2018comprehensive} whose goal is to find a low dimensional representation of the network nodes in some metric space so that the given similarity (or distance) function is preserved as much as possible. In summary, graph drawing focuses on the 2D/3D visualization of graphs \cite{dougrusoz1996circular, eiglsperger2001orthogonal, koren2005drawing, spielman2007spectral, tamassia2013handbook}, while network embedding emphasizes the learning of low dimensional graph representations. Despite the research goal, similar methodology has been applied to both areas. For instance, the KK algorithm \cite{kamada1989algorithm} was proposed for graph visualization as a force-based layout system with advantages such as good-quality results and strong theoretical foundations, but suffering from high computational cost and poor local minima. Similarly \cite{tenenbaum2000global} proposed a global geometric framework for network embedding to preserve the intrinsic geometry of the data as captured in the geodesic manifold distances between all pairs of data points. There are also some works on drawing graphs on lattice, \eg \cite{freese2004automated}.

In contrast to graph drawing, our focus is to project an existing graph onto the grid with minimum topological loss so that CNNs can be deployed efficiently and effectively to handle graph data. In such a context, we are not aware of any work in the literature that utilizes the graph drawing algorithms to facilitate GDL, to the best of our knowledge.

\bfsection{Graph Synthesis \& Generation}
Methods in this field, \eg \cite{grover2018graphite, li2018learning, you2018graphrnn, samanta2019nevae}, often aim to learn a (sophisticated) generative model that reflects the properties of the training graphs. Recently, \cite{kwon2019deep} proposed learning an encoder-decoder for the graph layout generation problem to systematically visualize a graph in diverse layouts using deep generative model. \cite{franceschi2019learning} proposed jointly learning the graph structure and the parameters of GCNs by approximately solving a bilevel program that learns a discrete probability distribution on the edges of the graph for classification problems.

In contrast to such methods above, our algorithm for GPGL is essentially a self-supervised learning algorithm that is performed for each individual graph and requires no training at all. Moreover, we focus on re-deploying each graph onto the grid as layout while preserving its topology. This procedure is separate from the training of CNNs later.

\begin{figure*}[t]
    \hfill
	\begin{minipage}[b]{0.195\textwidth}
		\begin{center}
			\centerline{\includegraphics[width=\columnwidth]{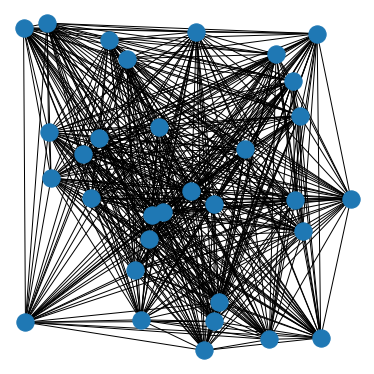}}
			\centerline{\footnotesize (a) Original graph}
		\end{center}
	\end{minipage}
    \hfill
	\begin{minipage}[b]{0.195\textwidth}
		\begin{center}
			\centerline{\includegraphics[width=\columnwidth]{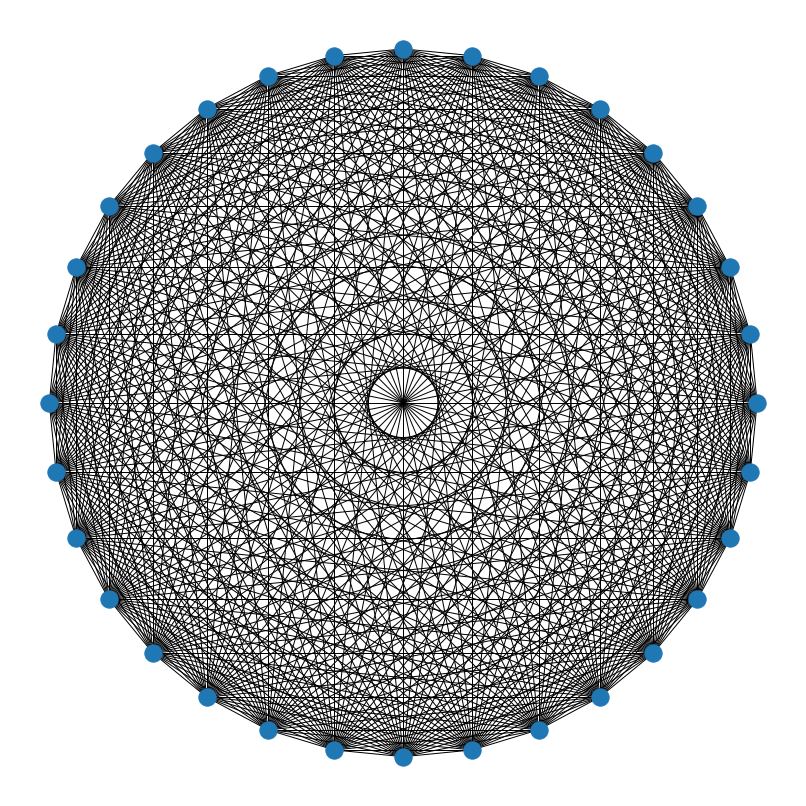}}
			\centerline{\footnotesize (b) KK before rounding}
		\end{center}
	\end{minipage}
	\hfill
	\begin{minipage}[b]{0.195\textwidth}
		\begin{center}
			\centerline{\includegraphics[width=\columnwidth]{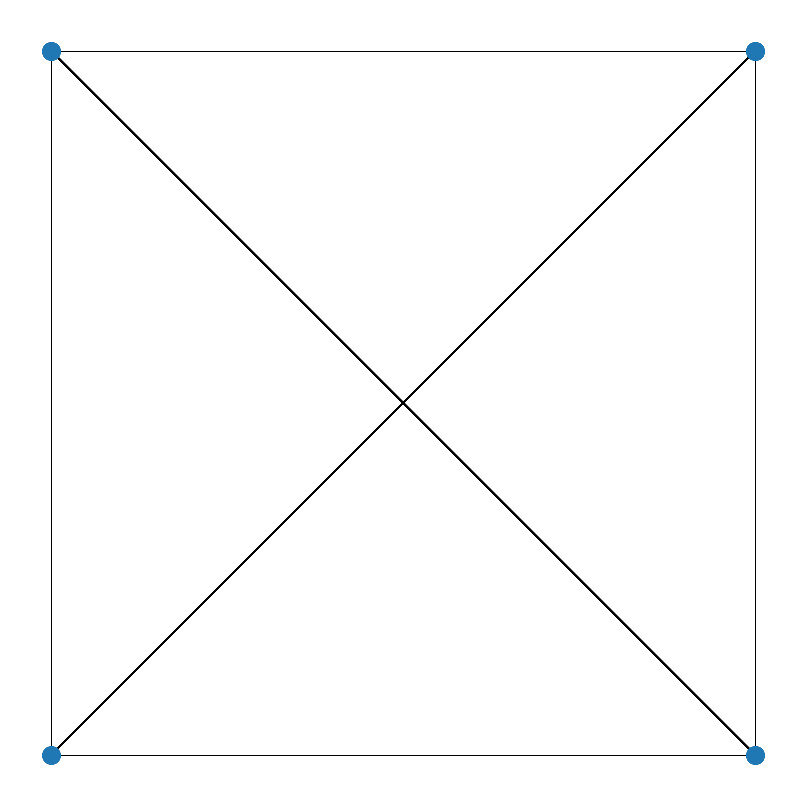}}
			\centerline{\footnotesize (c) KK after rounding}
		\end{center}
	\end{minipage}
	\hfill
	\begin{minipage}[b]{0.195\textwidth}
		\begin{center}
			\centerline{\includegraphics[width=\columnwidth]{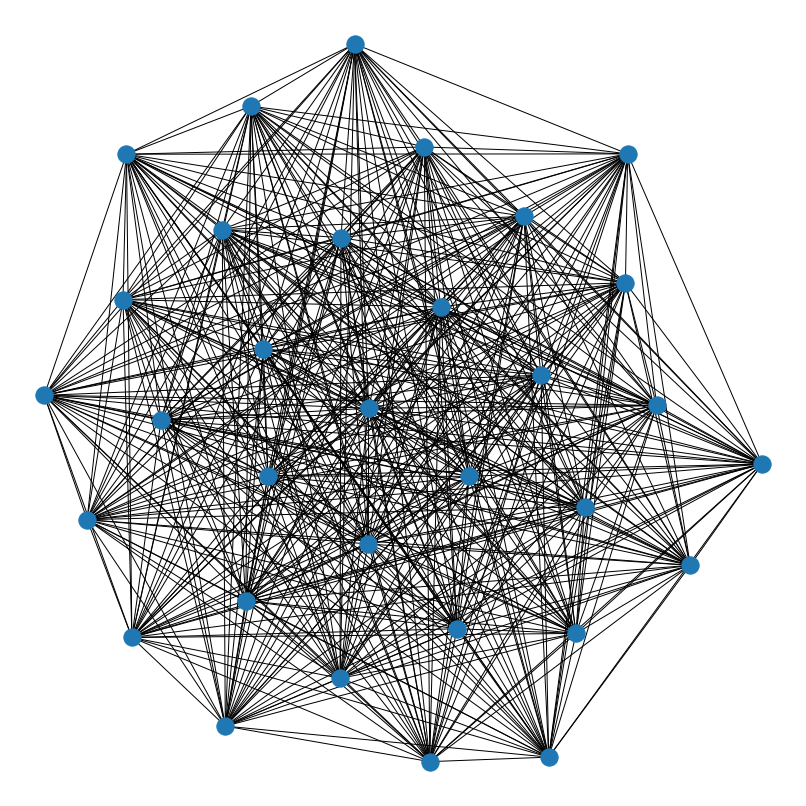}}
			\centerline{\footnotesize (d) Ours before rounding}
		\end{center}
	\end{minipage}
	\hfill
	\begin{minipage}[b]{0.195\textwidth}
		\begin{center}
			\centerline{\includegraphics[width=\columnwidth]{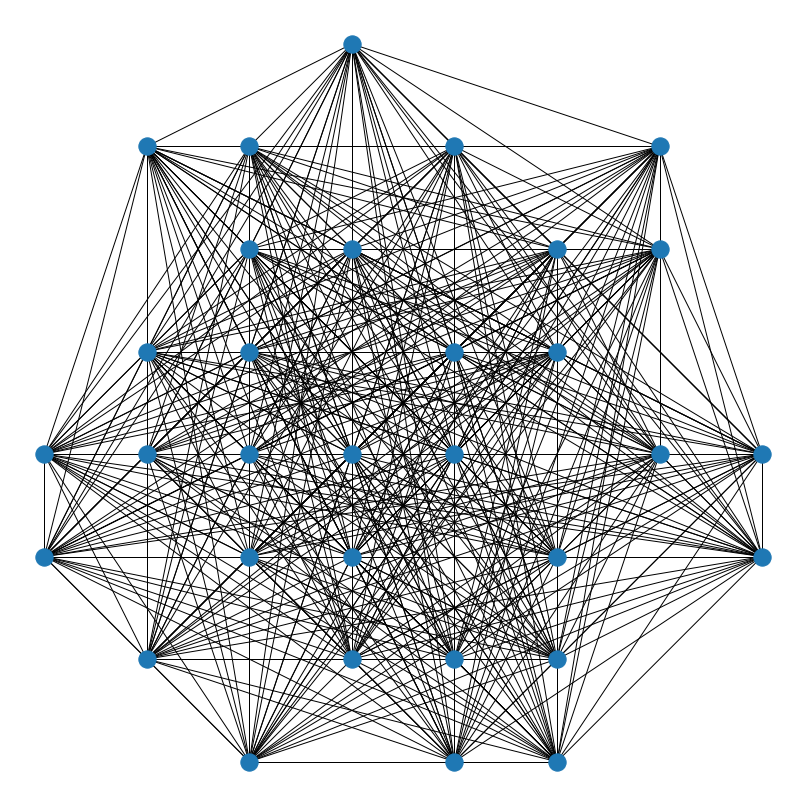}}
			\centerline{\footnotesize (e) Ours after rounding}
		\end{center}
	\end{minipage}
	\hfill
	\vspace{-5mm}
	\caption{\footnotesize Illustration of layout comparison between the KK algorithm and our proposed penalized KK algorithm before and after rounding based on a fully-connected graph with 32 vertices.}
	\label{fig:drawing_comparison}
	\vspace{-3mm}
\end{figure*}

\bfsection{Geometric Deep Learning}
In general GDL studies the extension of deep learning techniques to graph and manifold structured data (\eg \cite{kipf2016semi, bronstein2017geometric, monti2017geometric, huang2018building}). In particular in this paper we focus on graph data only. Broadly GDL methods can be categorized into spatial methods (\eg \cite{masci2015geodesic, boscaini2016learning, monti2017geometric}) and spectral methods (\eg \cite{defferrard2016convolutional, levie2018cayleynets, yi2017syncspeccnn}). Some nice survey on this topic can be found in \cite{bronstein2017geometric, zhang2020deep, hamilton2017representation, wu2020comprehensive}.

\cite{niepert2016learning} proposed a framework for learning convolutional neural networks by applying the convolution operations to the locally connected regions from graphs. We are different from such work by applying CNNs to the grids where graphs are projected to with topology preservation.
\cite{tixier2019graph} proposed an ad-hoc method to project graphs onto 2D grid and utilized CNNs for graph classification. Specifically each node is embedded into a high dimensional space, then mapped to 2D space by PCA, and finally quantized into a grid. 
In \cite{heimann2019distribution}, graphs were represented in 2D by approximating the Laplacian kernel mean map. In \cite{lyu2021treernn}, spanning trees were utilized to project a graph onto the image domain. Ho \etal \cite{ho2020efficient} converted adjacency matrices into more Column Ordering Free (COF) matrices and then built a deep network to learn high level representations from these matrices for graph classification.
In contrast, we propose a principled and systematic way based on graph drawing, \ie a nonconvex integer programming formulation, for mapping graphs onto 2D grid. Besides, our graph classification performance is much better than both works. On MUTAG and IMDB-B datasets we can achieve 94.18\% and 74.9\% test accuracy with 5.23\% improvement over \cite{niepert2016learning} and 4.5\% improvement over \cite{tixier2019graph}, respectively.

\bfsection{3D Point Cloud Processing}
Point clouds can be treated as graphs with undetermined connections. In point cloud processing, several existing works such as \cite{zhang2019linked} built the adjacency matrix using the K-nearest-neighbor search and then applied GCNs to the generated graph. Some other works like \cite{te2018rgcnn} used adaptive search ranges to generate the adjacency matrix and then applied GCNs to it. Point cloud processing, same as graph processing, suffers from the limited size and efficiency of GCN operators.

3D point cloud segmentation, as one of the key applications using point clouds, has attracted increasing research attention in the recent past. The existing works can be summarized into two major groups: point-based approaches and graph-based approaches. In point-based approaches, the pioneering work PointNet \cite{qi2017pointnet} applied fully connected layers to each single point and then extracted the global features by aggregating all the point-wise features. By its design, PointNet effectively gathers the global features from all points and broadcasts them back to all point for point-wise semantic prediction. However, the accuracy is limited due to the lack of local feature extraction. Later on, some other solutions are proposed to introduce the local feature extractions. PointNet++ \cite{qi2017pointnet++}, the succeeding work of PointNet, hierarchically grouped the points in local regions and extracted the local features along the hierarchy. PointCNN \cite{li2018pointcnn}, SpiderCNN \cite{xu2018spidercnn}, So-Net \cite{li2018so}, RS-CNN \cite{liu2019relation}, PointConv \cite{wu2019pointconv}, $\mathcal{\psi}$-CNN \cite{lei2019octree}, A-CNN \cite{komarichev2019cnn}, ShellNet \cite{zhang2019shellnet}, DensePoint \cite{liu2019densepoint},SPLATNet3D\cite{su2018splatnet} proposed their local point grouping and feature extraction operators, resulting in high accuracy.

The graph-based approaches, on the other hand, construct a connected graph from each point cloud and then apply GCNs to it. Kd-Net \cite{klokov2017escape}, RGCNN \cite{te2018rgcnn}, FeaStNet \cite{verma2018feastnet}, KC-Net \cite{shen2018mining}, DGCNN \cite{wang2019dynamic}, LDGCNN \cite{zhang2019linked} are some good works in this branch. However, the complexity of graph neural convolution kernels limits their capability of achieving outstanding accuracy. 

Besides, there are several works trying to project the point clouds into 2D grid maps and apply 2D convolution to it. SFCNN \cite{rao2019spherical} projected each point cloud into a spherical surface and applied a fractal convolution operation to it. InterpConv \cite{mao2019interpolated} creatively interpolated the graph convolution into a 2D convolution upon the adjacent grid cells. Our recent work \cite{lyu2020learning} solves the point cloud segmentation problem by transferring the point cloud into image space and applying U-Net on it. Those three approaches make good effort to introduce the 2D convolution into point cloud processing. However, they yield customized 2D convolution kernels that do not support commonly used feature extraction backbones such as VGG \cite{Simonyan15VGG}, ResNet \cite{he2016deep} and Xception \cite{chollet2017xception}.

In our approach, we follow the graph-based approaches to construct a connected graph upon each point cloud. In contrast, we further project the graph nodes onto a 2D grid map using H-GPGL. In this way each point in the point cloud is assigned to one of the nodes in the 2D grid and we can now apply CNNs to the 2D graph layouts as we do on images.

\section{Graph-Preserving Grid Layout (GPGL)}\label{sec:GPGL}

\subsection{Problem Setup}
Let $\mathcal{G}=(\mathcal{V}, \mathcal{E})$ be an undirected graph with a vertex set $\mathcal{V}$ and an edge set $\mathcal{E}\subseteq\mathcal{V}\times\mathcal{V}$, and $s_{ij}\geq1, \forall i\neq j$ be the graph-theoretic distance such as shortest-path between two vertices $v_i, v_j\in\mathcal{V}$ on the graph that encodes the graph topology. 

Now we would like to learn a function $f:\mathcal{V}\rightarrow\mathbb{Z}^2$ to map the graph vertex set to a set of 2D integer coordinates on the grid so that the graph topology can be preserved as much as possible given a metric $d:\mathbb{R}^2\times\mathbb{R}^2\rightarrow\mathbb{R}$ and a loss $\ell:\mathbb{R}\times\mathbb{R}\rightarrow\mathbb{R}$. As a result, we are seeking for $f$ to minimize the following objective:
\begin{align}\label{eqn:general_formula}
\min_{f}\sum_{i\neq j}\ell(d(f(v_i), f(v_j)), s_{ij}).
\end{align}
Now letting $\mathbf{x}_i=f(v_i)\in\mathbb{Z}^2$ as reparametrization, we can rewrite Eq. \ref{eqn:general_formula} as the following {\em integer programming} problem:
\begin{align}\label{eqn:int_pro}
\min_{\mathcal{X}\subseteq\mathbb{Z}^2}\sum_{i\neq j}\ell(d(\mathbf{x}_i, \mathbf{x}_j), s_{ij}),
\end{align}
where the set $\mathcal{X}=\{\mathbf{x}_i\}$ denotes the {\em 2D grid layout} of the graph, \ie all the vertex coordinates on the 2D grid. 

\bfsection{Self-Supervision}
Note that the problem in Eq. \ref{eqn:int_pro} needs to be solved for each individual graph, which is related to self-supervision as a form of unsupervised learning where the data itself provides the supervision \cite{zisserman2018}. This property is beneficial for data augmentation, as every local minimum will lead to a grid layout for the same graph.

\bfsection{2D Grid Layout}
In this paper we are interested in learning only 2D grid layouts for graphs, rather than higher dimensional grids (even 3D) where we expect that the layouts would be more compact in volume and would have larger variance in configuration, both bringing more challenges into training CNNs properly later. We confirm our hypothesis based on empirical observations. Besides, the implementation of 3D basic operations in CNNs such as convolution and pooling are often slower than 2D counterparts, and the operations beyond 3D are not available publicly. 

\bfsection{Relaxation \& Rounding for Integer Programming}
Integer programming is NP-complete and thus finding exact solutions is challenging, in general \cite{wolsey2014integer}. Relaxation and rounding is a widely used heuristic for solving integer programming due to its efficiency \cite{bradley1977applied}, where the rounding operator is applied to the solution from the real-number relaxed problem as the solution for the integer programming. In this paper we employ this heuristic to learn 2D grid layouts. For simplicity, in the sequel we will only discuss how to solve the relaxation problem (\ie before rounding).

\subsection{Penalized Kamada-Kawai Algorithm for GPGL}
In this paper we set $\ell$ and $d$ in Eq. \ref{eqn:int_pro} to the least-square loss and Euclidean distance to preserve topology, respectively, so that we can develop new algorithms based on the classic KK algorithm. 

\subsubsection{Preliminary: Kamada-Kawai Algorithm}
The KK graph drawing algorithm \cite{kamada1989algorithm} was designed for a (relaxed) problem in Eq. \ref{eqn:int_pro} with a specific objective function as follows:
\begin{align}\label{eqn:kk}
\min_{\mathcal{X}\subseteq\mathbb{R}^2}\mathcal{L}_{KK} = \sum_{i\neq j}\frac{1}{2}\left(\frac{d_{ij}}{s_{ij}} - 1\right)^2,
\end{align}
where $d_{ij}=\|\mathbf{x}_i - \mathbf{x}_j\|, \forall (i,j)$ denotes the Euclidean distance between vertices $v_i$ and $v_j$. Note that there is no regularization/penalty to control the distribution of nodes in 2D visualization. 

Fig. \ref{fig:drawing_comparison} illustrates the problems using the KK algorithm when projecting the fully-connected graph onto 2D grid. Eventually KK learns a circular distribution with equal space among the vertices as in Fig. \ref{fig:drawing_comparison}(a) to minimize the topology preserving loss in Eq.~\ref{eqn:kk}. When taking a close look at these 2D locations we find that after transformation all these locations are within the square area $[0,1]\times[0,1]$, leading to the square pattern in Fig. \ref{fig:drawing_comparison}(b) after rounding. Such behavior totally makes sense to KK because it does not care about the grid layout but only the topology preserving loss. However, our goal is not only to preserve the topology but also to make graphs visible on the 2D grid in terms of vertices.

\subsubsection{Our Algorithm}
To this end, we propose a penalized KK algorithm as listed in Alg.~\ref{alg:RKK_alg}, which tries to minimize the following penalized KK loss:
\begin{align}\label{eqn:L_GPGL}
    \min_{\mathcal{\mathcal{X}}\subseteq\mathbb{Z}^2}\mathcal{L}_{GPGL} = \mathcal{L}_{KK} + \mathcal{L}_{sep}.
\end{align}

\bfsection{Vertex Separation Penalty}
We propose a novel vertex separation penalty to regularize the vertex distribution on the grid. The intuition behind it is that when the minimum distance among all the vertex pairs is larger than a threshold, say 1, it will guarantee that after rounding every vertex will be mapped to a unique 2D location with no overlap. But when any distance is smaller than the threshold, it should be considered to enlarge the distance, otherwise, no penalty. Moreover, we expect that the penalties will grow faster than the change of distances and in such a way the vertices can be re-distributed more rapidly. Based on these considerations we propose the following penalty:
\begin{align}\label{eqn:r_sep}
\mathcal{L}_{sep} = \lambda\sum_{i\neq j} \max\left\{0, \frac{\alpha}{d_{ij}} - 1\right\}, 
\end{align}
where $\alpha\geq0, \lambda\geq0$ are two predefined constants. From the gradient of $\mathcal{L}_{sep}$ \wrt an arbitrary 2D variable $\mathbf{x}_i$, that is,
\begin{align}
\frac{\partial\mathcal{L}_{sep}}{\partial\mathbf{x}_i} = -\lambda\sum_{i\neq j} \frac{\mathbf{x}_i - \mathbf{x}_j}{d_{ij}^3}\mathbf{1}_{\{d_{ij}<\alpha\}}
\end{align}
where $\mathbf{1}_{\{\cdot\}}$ denotes the indicator function returning 1 if the condition is true, otherwise 0, we can clearly see that $\alpha$ as a threshold controls when penalties occur, and $\lambda$ controls the trade-off between the two losses, leading to different step sizes in gradient based optimization.

\begin{algorithm}[t]
	\SetAlgoLined
	\SetKwInOut{Input}{Input}\SetKwInOut{Output}{Output}
	\Input{undirected graph $\mathcal{G}=(\mathcal{V}, \mathcal{E})$, parameters $\alpha, \lambda$}
	\Output{2D grid layout $\mathcal{X}^*$}
	\BlankLine
	
	Compute graph distance $\{s_{ij}\}$; 
	
	$\tilde{\mathcal{X}}\leftarrow\argmin_{\mathcal{X}}\mathcal{L}_{KK}$ with a (randomly shuffled) circular layout;
	
	
	
	
	$\mathcal{X}^*\leftarrow\argmin_{\mathcal{X}\subseteq\mathbb{R}^2}\mathcal{L}_{GPGL}$ with set $\tilde{\mathcal{X}}$ as initialization;
	
	$\mathcal{X}^*\leftarrow\mbox{round}(\mathcal{X}^*)$;
	
	\Return $\mathcal{X}^*$;
	\caption{Penalized Kamada-Kawai Algorithm for GPGL}\label{alg:RKK_alg}
\end{algorithm}

\bfsection{Initialization}
Note that both KK and our algorithms are highly nonconvex, and thus good initialization is need to make both work well, \ie convergence to good local minima. 

To this end, we first utilize the KK algorithm to generate a vertex distribution. To do so, we employ the implementation in the Python library {\sc NetworkX} \cite{team2014networkx} which uses a circular layout as initialization by default. By default setting, L-BFGS-B Nonlinear Optimization is implemented to optimize Eqn. \ref{eqn:L_GPGL}. As discussed above, KK has no control on the vertex distribution. This may lead to serious {\bf vertex loss} problems in the 2D grid layout where some of vertices in the original graph merge together as a single vertex on the grid after rounding due to small distances (see our experiments).



\bfsection{Topology Preservation with Penalty}
As we observe, the key challenge in topology preservation comes from the node degree, and the lower degree the easier for preservation. Since there are only 8 neighbors at most in the 2D grid layout, it will induce a penalty for a graph vertex whose degree is higher than 8. Fig. \ref{fig:drawing_comparison} illustrates such a case where the original graph is full-connected with 32 vertices. With the help of our proposed penalty, we manage to map this graph to a ball-like grid layout, as shown in Fig. \ref{fig:drawing_comparison}(c) and (d). 
Besides we have the following proposition to support such observations:

\begin{prop}\label{prop:1}
	An ideal 2D grid layout with no vertex loss for a full-connected graph with $|\mathcal{V}|$ vertices is a ball-like shape with radius of $\lceil (\frac{|\mathcal{V}|}{\pi})^{\frac{1}{2}}\rceil$ that minimizes Eq.~\ref{eqn:int_pro} with relaxation of the penalized Kamada-Kawai loss. Here $\lceil\cdot\rceil$ denotes the ceiling operation.
\end{prop}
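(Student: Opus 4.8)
The plan is to exploit the very special structure of a fully-connected graph, for which every graph-theoretic distance satisfies $s_{ij}=1$. Substituting this into Eq.~\ref{eqn:kk} collapses the Kamada-Kawai term to $\mathcal{L}_{KK}=\frac{1}{2}\sum_{i\neq j}(d_{ij}-1)^2$, \ie a sum of squared deviations of the Euclidean distances from the single target value $1$. First I would pin down the two constraints built into the statement. The phrase \emph{no vertex loss} forces $f$ to send the vertices to distinct integer coordinates, so any two of them lie at Euclidean distance $d_{ij}\geq 1$. Moreover, the separation penalty of Eq.~\ref{eqn:r_sep} contributes nothing once $d_{ij}\geq\alpha$ for every pair, which for the natural choice $\alpha\leq 1$ is automatic under the no-loss constraint (the minimum lattice distance is $1$). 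Hence, on the feasible set, the objective of Eq.~\ref{eqn:int_pro} under the relaxed penalized loss reduces to minimizing $\sum_{i\neq j}(d_{ij}-1)^2$ over all placements of $|\mathcal{V}|$ distinct lattice points.

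Next I would convert this into a compactness statement. Each summand $(d_{ij}-1)^2$ is increasing in $d_{ij}$ on the feasible range $d_{ij}\geq 1$, so the objective rewards packing the $|\mathcal{V}|$ points as tightly as distinctness permits and penalizes the large pairwise distances that an elongated or sparse layout incurs. I would formalize this through the second moment: using the identity $\sum_{i\neq j}d_{ij}^2 = 2|\mathcal{V}|\sum_i\|\mathbf{x}_i-\bar{\mathbf{x}}\|^2$, where $\bar{\mathbf{x}}$ is the centroid, the dominant quadratic part of the objective is exactly the moment of inertia of the point set about its centroid. For large $|\mathcal{V}|$ this term (of order $|\mathcal{V}|^3$) dominates the remaining linear term $-2\sum_{i\neq j}d_{ij}$ (of order $|\mathcal{V}|^{5/2}$), so the minimizer is governed by the moment of inertia. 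The configuration of fixed cardinality minimizing this moment is, in the continuous limit, bounded by a level set of $\|\mathbf{x}-\bar{\mathbf{x}}\|$, that is a disk; this is the source of the claimed \emph{ball-like shape}, and I would argue the linear term only perturbs it without altering the shape of the optimum.

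Finally I would fix the radius by a lattice-point counting argument. A disk of radius $r$ centered at (or near) a lattice point contains $\pi r^2 + O(r)$ integer points, so to accommodate exactly $|\mathcal{V}|$ distinct vertices the radius must satisfy $\pi r^2 \gtrsim |\mathcal{V}|$, giving $r \gtrsim (\tfrac{|\mathcal{V}|}{\pi})^{1/2}$. Taking the smallest integer radius that still fits all vertices then yields $r=\lceil(\tfrac{|\mathcal{V}|}{\pi})^{1/2}\rceil$, matching the statement.

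I expect the genuine difficulty to lie in the second step: rigorously upgrading ``most compact'' to ``disk.'' The second-moment-minimizing arrangement of lattice points is only approximately a disk, and controlling the discrete-to-continuous gap is precisely the Gauss circle problem, whose $O(r)$ error term is what forces the ceiling in the radius and blurs the exact boundary of the layout. I would therefore state the characterization of the optimal layout only up to this boundary effect, establishing disk optimality via the continuous isoperimetric optimality of the ball for the second moment and transferring it to the lattice through the standard counting estimate, rather than attempting an exact combinatorial minimization over all $|\mathcal{V}|$-point lattice sets.
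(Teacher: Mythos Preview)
Your proposal is correct at the level of rigor the statement admits, and it takes a genuinely different route from the paper. Both arguments open identically: for a complete graph $s_{ij}=1$, the no-vertex-loss constraint forces $d_{ij}\geq 1$, and the separation penalty is taken to vanish, so the objective collapses to $\sum_{i\neq j}(d_{ij}-1)^2$; both also close with the same area count $|\mathcal{V}|\leq\pi r^2$ to extract the radius. The divergence is in the middle step. The paper argues \emph{dynamically}: it initializes the vertices on a circle, pictures the gradient field of the KK loss as a ``sandglass'' pulling every vertex toward the common center, and treats each vertex as a unit-diameter ball so that the flow stabilizes once the balls pack into a disk. You argue \emph{variationally}: you expand the loss, invoke the identity $\sum_{i\neq j}d_{ij}^2=2|\mathcal{V}|\sum_i\|\mathbf{x}_i-\bar{\mathbf{x}}\|^2$ to isolate the moment of inertia as the dominant $O(|\mathcal{V}|^3)$ term, and appeal to isoperimetric optimality of the disk for the second moment, passing to the lattice via Gauss-circle counting. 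Your approach buys a sharper account of why the disk is optimal (and an honest identification of where the approximation lives, namely the $O(r)$ boundary error), at the cost of some analytic overhead; the paper's approach buys a direct connection to how the penalized KK optimizer actually behaves from its circular initialization, which is closer to the algorithm being analyzed but is purely heuristic about convergence. One minor point: you hedge the vanishing of $\mathcal{L}_{sep}$ with ``for the natural choice $\alpha\leq 1$,'' whereas the paper simply asserts $\mathcal{L}_{sep}=0$ without qualification; neither treatment is fully rigorous given the default $\alpha=1.25$, so your caveat is the more honest of the two.
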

\begin{proof}
	Given the conditions in the proposition above, we have $s_{ij}=1, d_{ij}\geq 1,  \forall i\neq j$ and $\mathcal{L}_{sep}=0$. Without loss of generalization, we uniformly deploy the graph vertices in a circle and set the circular center $A$ to a node on the 2D grid. Now imagine the gradient field over all the vertices as a sandglass centered at $A$ where each vertex is a ball with a unit diameter. Then it is easy to see that by the ``gravity'' (\ie gradient) all the vertices move towards the center $A$, and eventually are stabilized (as a local minimum) within an $r$-radius circle whose covering area should satisfy $|\mathcal{V}|\leq \pi r^2$, \ie $r=\lceil (\frac{|\mathcal{V}|}{\pi})^{\frac{1}{2}}\rceil$ as the smallest sufficient radius to cover all the vertices with guarantee. We now complete our proof.
\end{proof}

Note that Fig. \ref{fig:drawing_comparison}(d) exactly verifies Prop. \ref{prop:1} with a radius $r=\lceil (\frac{32}{\pi})^{\frac{1}{2}}\rceil=4$. In summary, our algorithm can (approximately) manage to preserve graph topology on the 2D grid even when the node degree is higher than 8.

\bfsection{Computational Complexity} 
The KK algorithm has the complexity of, at least, $O(|\mathcal{V}|^2)$ \cite{kobourov2012spring} that limits the usage of KK to medium-size graphs (\eg 50-500 vertices). Since our algorithm in Alg. \ref{alg:RKK_alg} is based on KK, it unfortunately inherits this limitation as well. To accelerate the computation for large-scale graphs, we potentially can adopt the strategy in multi-scale graph drawing algorithms such as \cite{harel2000fast}. However, such an extension is out of scope of this paper, and we will consider it in our future work.

\subsection{Experiments: Small Graph Classification}
\subsubsection{Data Sets \& Data Augmentation}
We evaluate our method, \ie GPGL + (multi-scale maxout) CNNs, on four medium-size benchmark data sets for graph classification, namely MUTAG, IMDB-B, IMDB-M and PROTEINS. Table \ref{tab:statistics} summarizes some statistics of each data set. Note that the max node degree on each data set is at least 8, indicating that ball-like patterns as discussed in Prop. \ref{prop:1} may occur, especially for IMDB-B and IMDB-M.

As mentioned in self-supervision, each local minimum from our penalized KK algorithm in Alg. \ref{alg:RKK_alg} will lead to a grid layout for the graph, while each minimum depends on its initialization. Therefore, to augment grid layout data from graphs, we do a random shuffle on the circular layout when applying Alg. \ref{alg:RKK_alg} to an individual graph. Fig. \ref{fig:ablation_augmentation} illustrates two augmented image representations from a graph sample in MUTAG dataset.

\begin{figure}[h]

	\begin{minipage}[b]{0.3\columnwidth}
		\begin{center}
			\centerline{\includegraphics[width=\columnwidth]{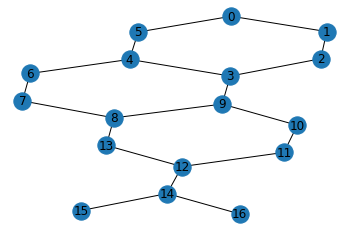}}
			\centerline{\footnotesize (a) Input graph}
		\end{center}
	\end{minipage}
	\hfill
	\begin{minipage}[b]{0.3\columnwidth}
		\begin{center}
			\centerline{\includegraphics[width=\columnwidth]{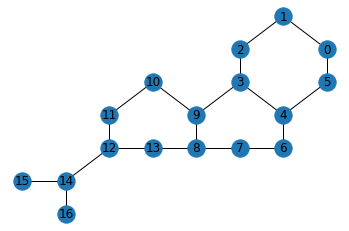}}
			\centerline{\footnotesize (b) Output image 1}
		\end{center}
	\end{minipage}
	\hfill
	\begin{minipage}[b]{0.3\columnwidth}
		\begin{center}
			\centerline{\includegraphics[width=\columnwidth]{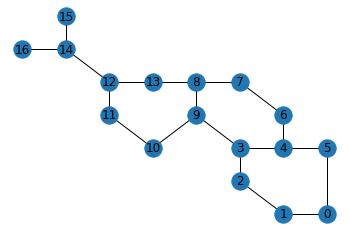}}
			\centerline{\footnotesize (c) Output image 2}
		\end{center}
	\end{minipage}
	\hfill
	\vspace{-3mm}
	\caption{\footnotesize Illustration of augmented representations from a graph using GPGL}
	\label{fig:ablation_augmentation}
\end{figure}
\vspace{-5mm}

\subsubsection{Grid-Layout based 3D Representation}
Once a grid layout is generated, we first crop the layout with a sufficiently large fixed-size window (\eg $32\times 32$), and then associate each vertex feature vector from the graph with the projected node within the window. All the layouts are aligned to the top-left corner of the window. The rest of nodes in the window with no association of feature vectors are assigned to zero vectors. 

Once vertex loss occurs, we take an average, by default, of all the vertex feature vectors (\ie average-pooling) and assign it to the grid node. We also compare average-pooling with max-pooling for merging vertices, and observe similar performance empirically in terms of classification.

\begin{figure*}[t]

	\begin{minipage}[b]{0.195\textwidth}
		\begin{center}
			\centerline{\includegraphics[width=\columnwidth]{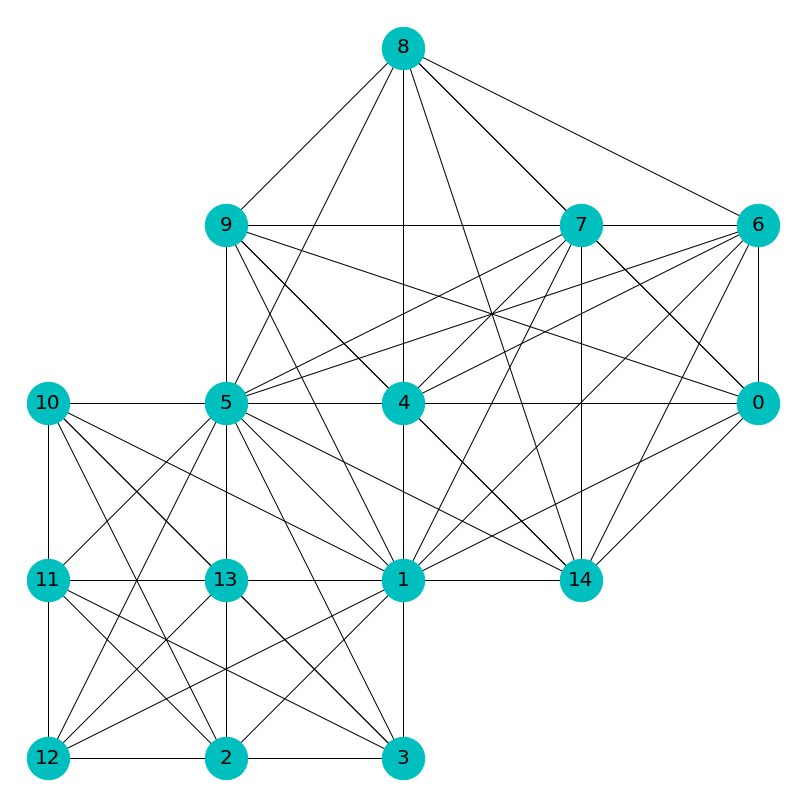}}
			\centerline{\footnotesize (a) $\alpha=1.00, \lambda=1000$}
		\end{center}
	\end{minipage}
	\hfill
	\begin{minipage}[b]{0.195\textwidth}
		\begin{center}
			\centerline{\includegraphics[width=\columnwidth]{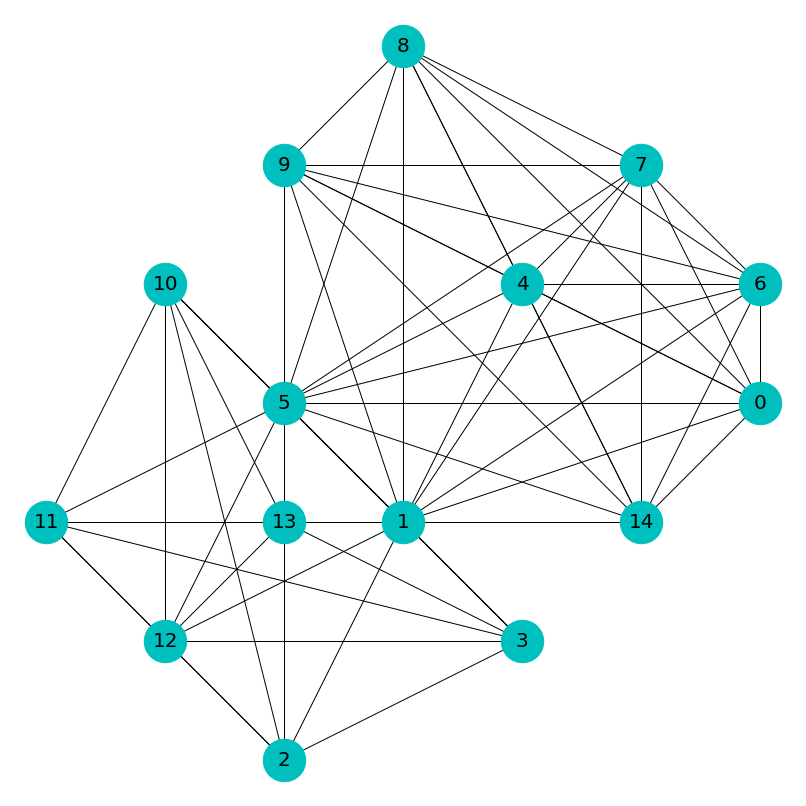}}
			\centerline{\footnotesize (b) $\alpha=1.50, \lambda=1000$}
		\end{center}
	\end{minipage}
	\hfill
	\begin{minipage}[b]{0.195\textwidth}
		\begin{center}
			\centerline{\includegraphics[width=\columnwidth]{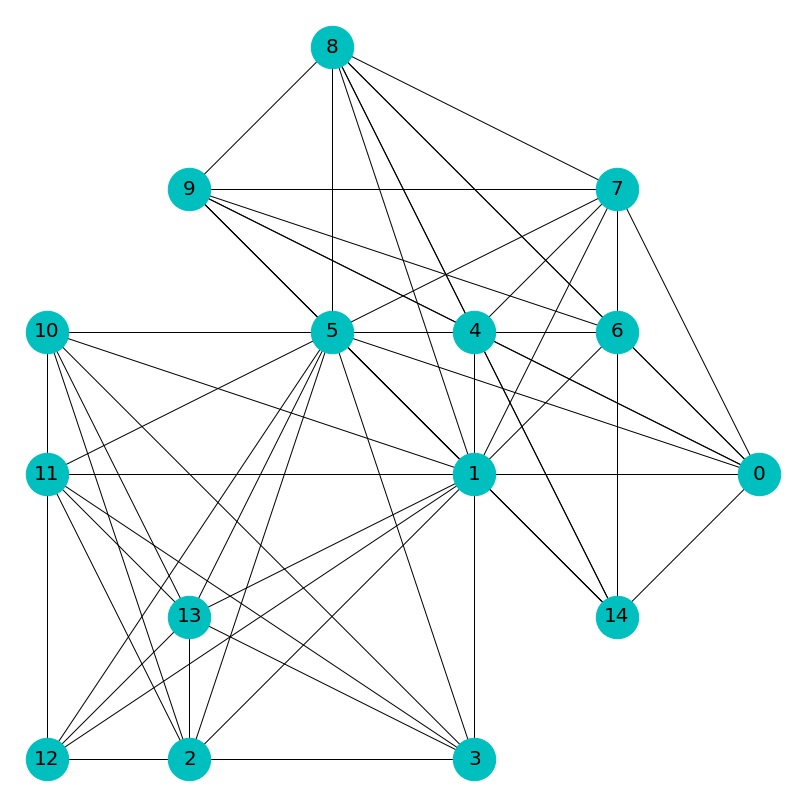}}
			\centerline{\footnotesize (c) $\alpha=1.25, \lambda=1000$}
		\end{center}
	\end{minipage}
	\hfill
	\begin{minipage}[b]{0.195\textwidth}
		\begin{center}
			\centerline{\includegraphics[width=\columnwidth]{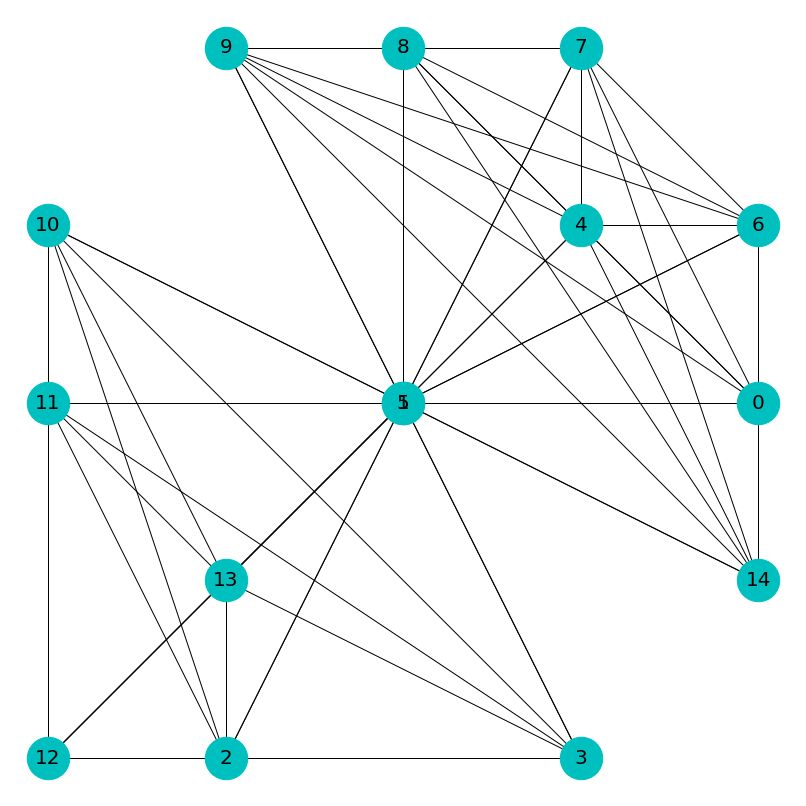}}
			\centerline{\footnotesize (d) $\alpha=1.25, \lambda=200$}
		\end{center}
	\end{minipage}
	\hfill
	\begin{minipage}[b]{0.195\textwidth}
		\begin{center}
			\centerline{\includegraphics[width=\columnwidth]{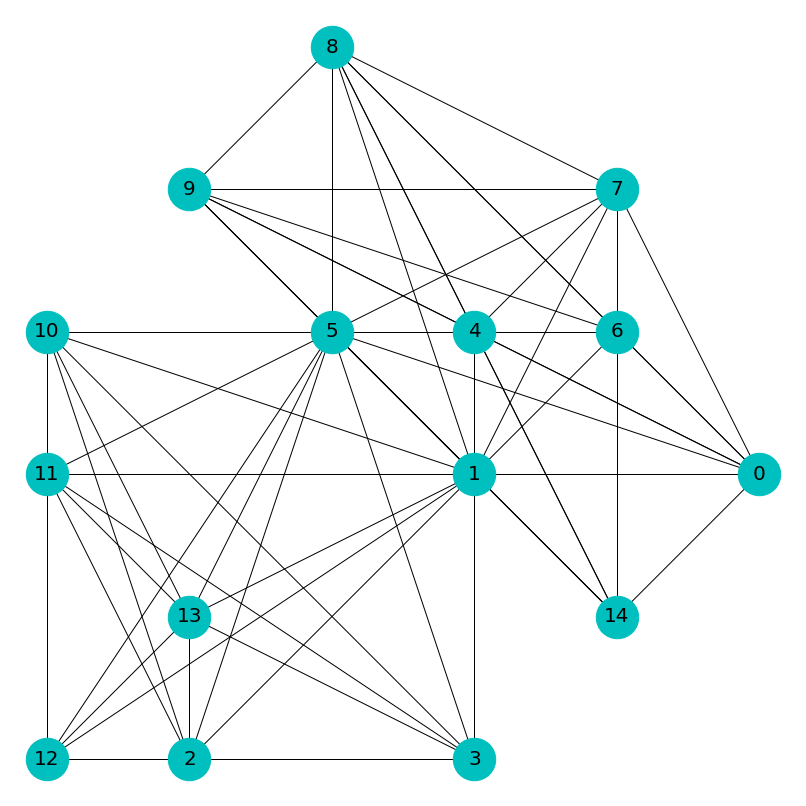}}
			\centerline{\footnotesize (e) $\alpha=1.25, \lambda=5000$}
		\end{center}
	\end{minipage}
	\hfill
	\vspace{-3mm}
	\caption{\footnotesize Illustration of effects of different combinations of $\alpha, \lambda$ on grid layout generation (IMDB-B)}
	\label{fig:alpha_lambda}
\end{figure*}
\begin{figure*}[t]
	\begin{minipage}[b]{0.3\textwidth}
		\begin{center}
			\centerline{\includegraphics[width=\columnwidth]{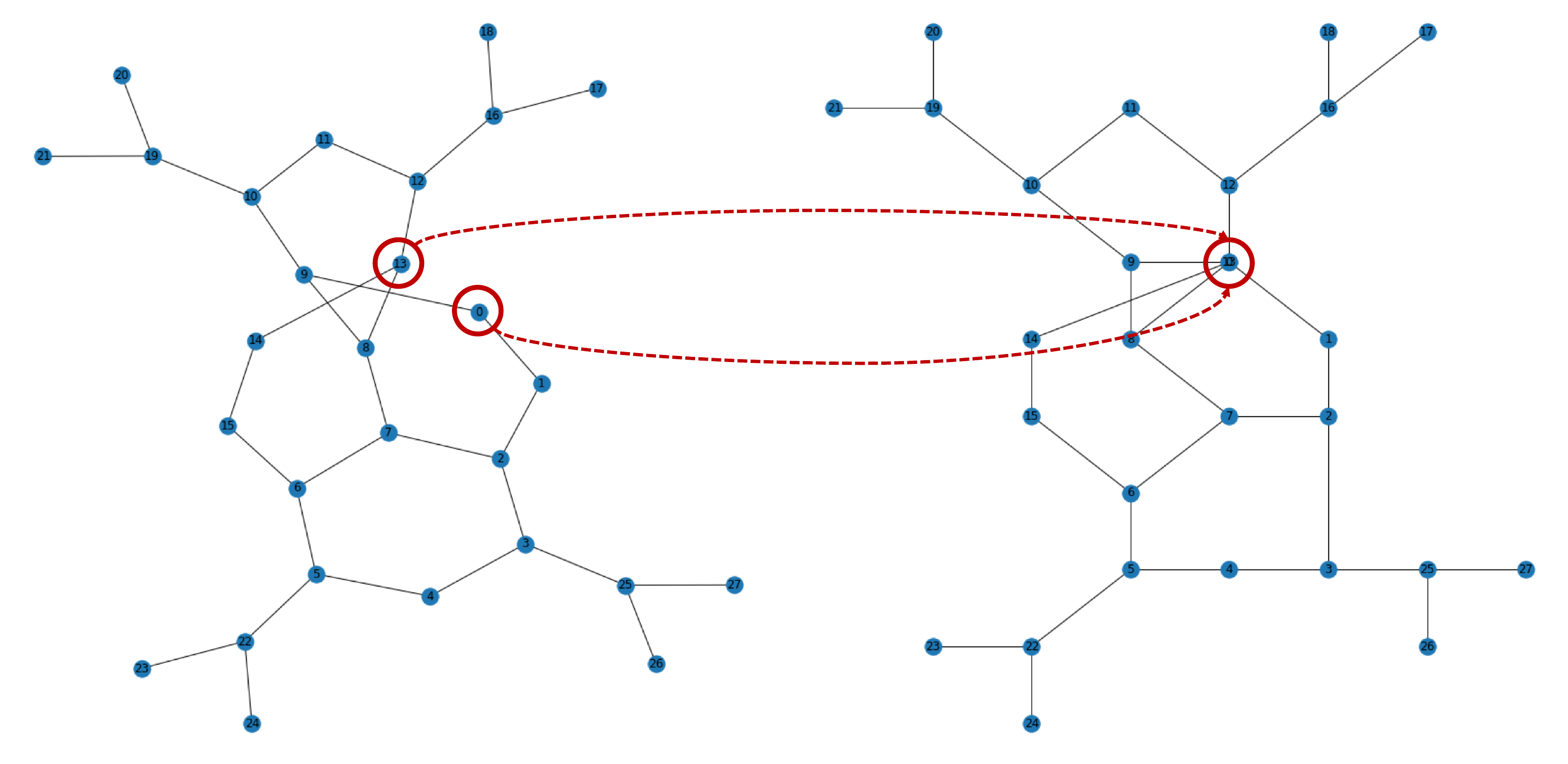}}
			\centerline{\footnotesize (a) MUTAG}
		\end{center}
	\end{minipage}
	\hfill
	\begin{minipage}[b]{0.3\textwidth}
		\begin{center}
			\centerline{\includegraphics[width=\columnwidth]{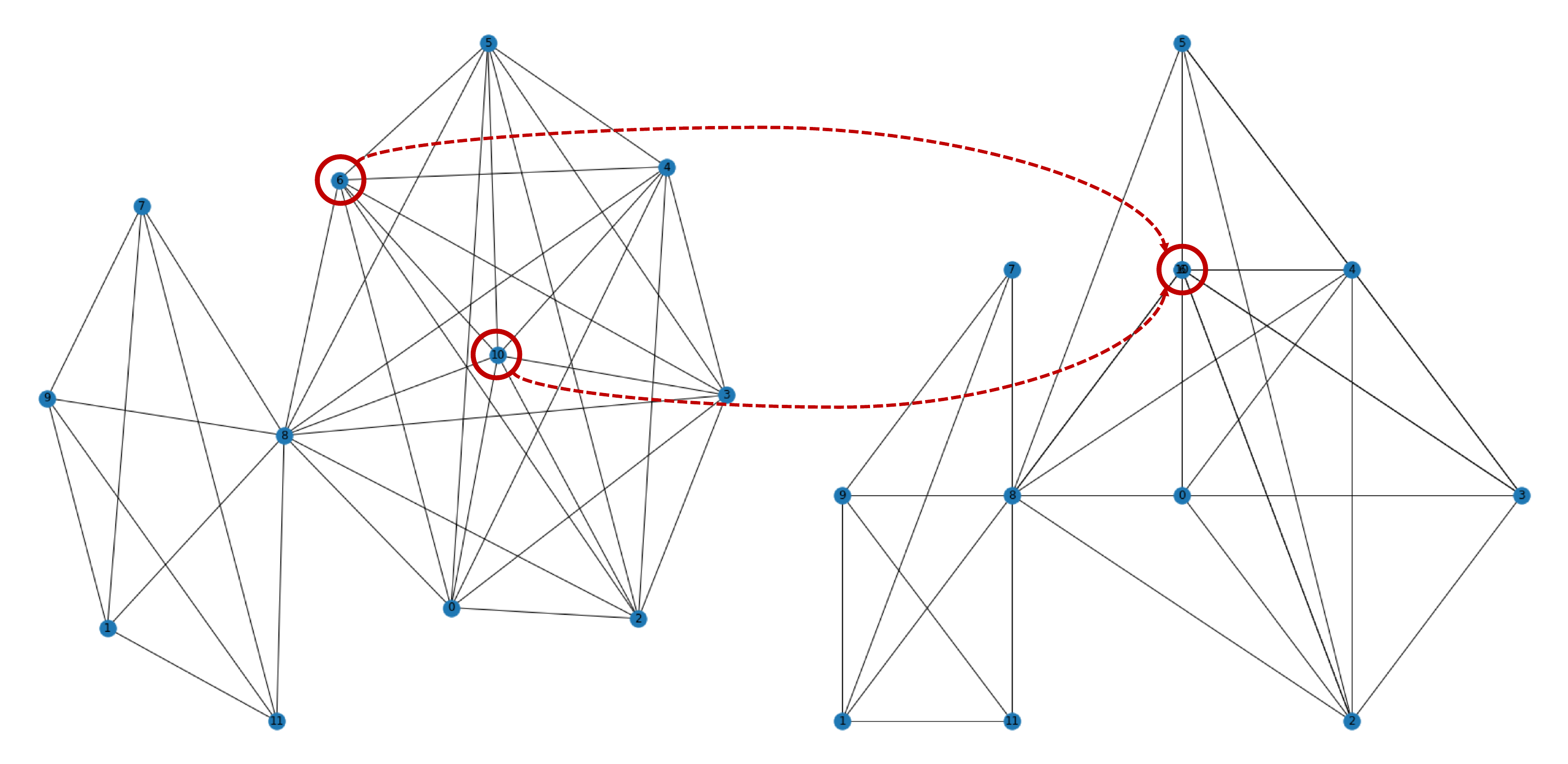}}
			\centerline{\footnotesize (b) IMDB-B}
		\end{center}
	\end{minipage}
	\hfill
	\begin{minipage}[b]{0.3\textwidth}
		\begin{center}
			\centerline{\includegraphics[width=\columnwidth]{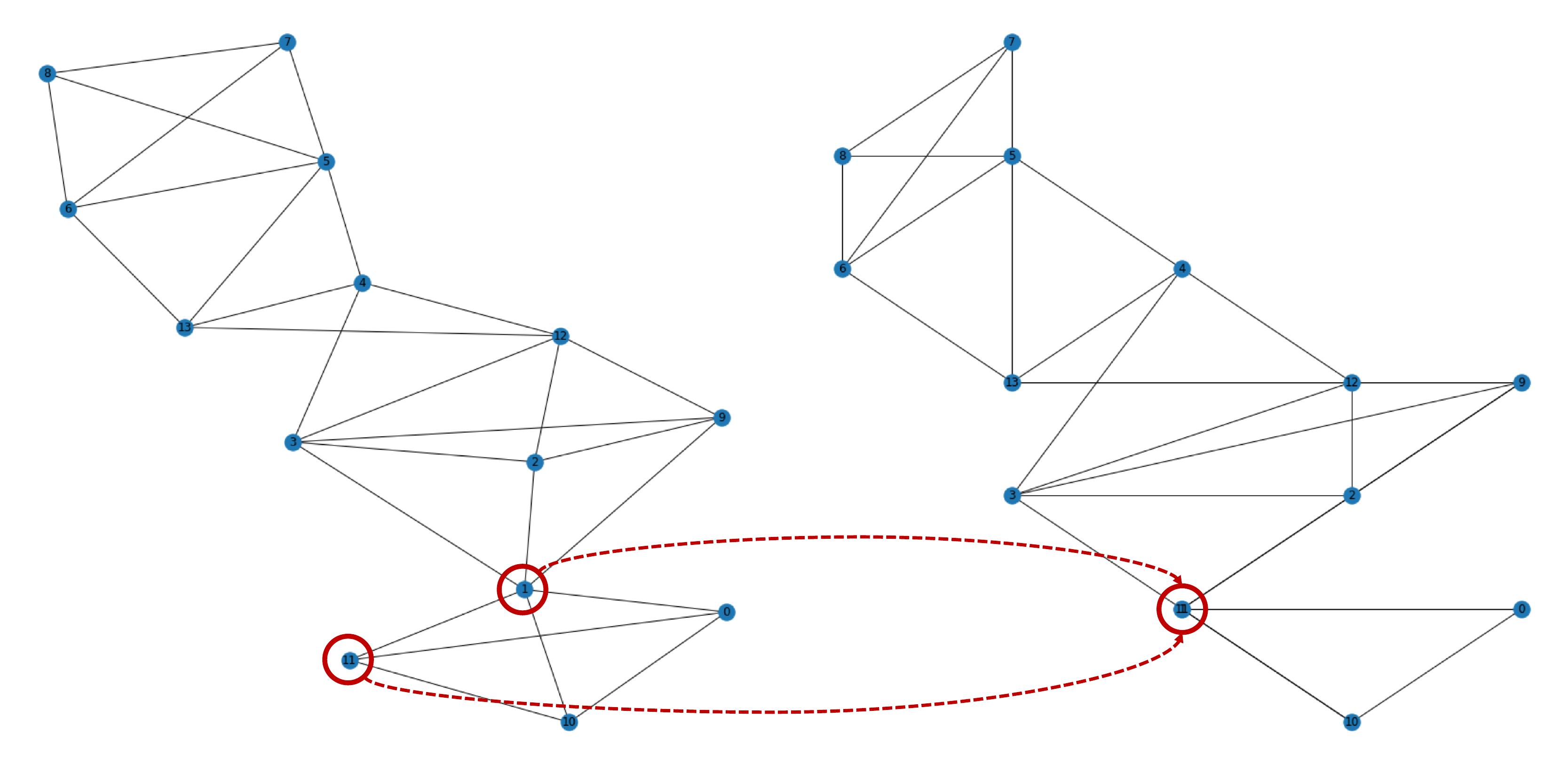}}
			\centerline{\footnotesize (c) PROTEINS}
		\end{center}
	\end{minipage}
	\hfill
	\caption{\footnotesize Illustration of vertex loss on different data sets: In each subfigure, {\bf (left)} before rounding and {\bf (right)} after rounding.}
	\label{fig:vertex_loss}
\end{figure*}

\subsubsection{Classifier: Multi-Scale Maxout CNNs (MSM-CNNs)}
We apply CNNs to the 3D representations of graphs for classification. As we discussed above, once the node degree is higher than 8, the grid layout cannot fully preserve the topology, but rather tends to form a ball-like compact pattern with larger neighborhood. To capture such neighborhood information effectively, the kernel sizes in the 2D convolution need to vary. Therefore, the problem now boils down to a feature selection problem with convolutional kernels.

Considering these, here we propose using a multi-scale maxout CNN as illustrated in Fig. \ref{fig:mm_cnn}. We use consecutive convolutions with smaller kernels to approximate the convolutions with larger kernels. For instance, we use three $3\times3$ kernels to approximate a $7\times7$ kernel. The maxout \cite{goodfellow2013maxout} operation selects which scale per grid node is good for classification and outputs the corresponding features. Together with other CNN operations such as max-pooling, we can design deep networks, if necessary.

\begin{figure}[H]
	\centering
	\includegraphics[width=0.95\linewidth]{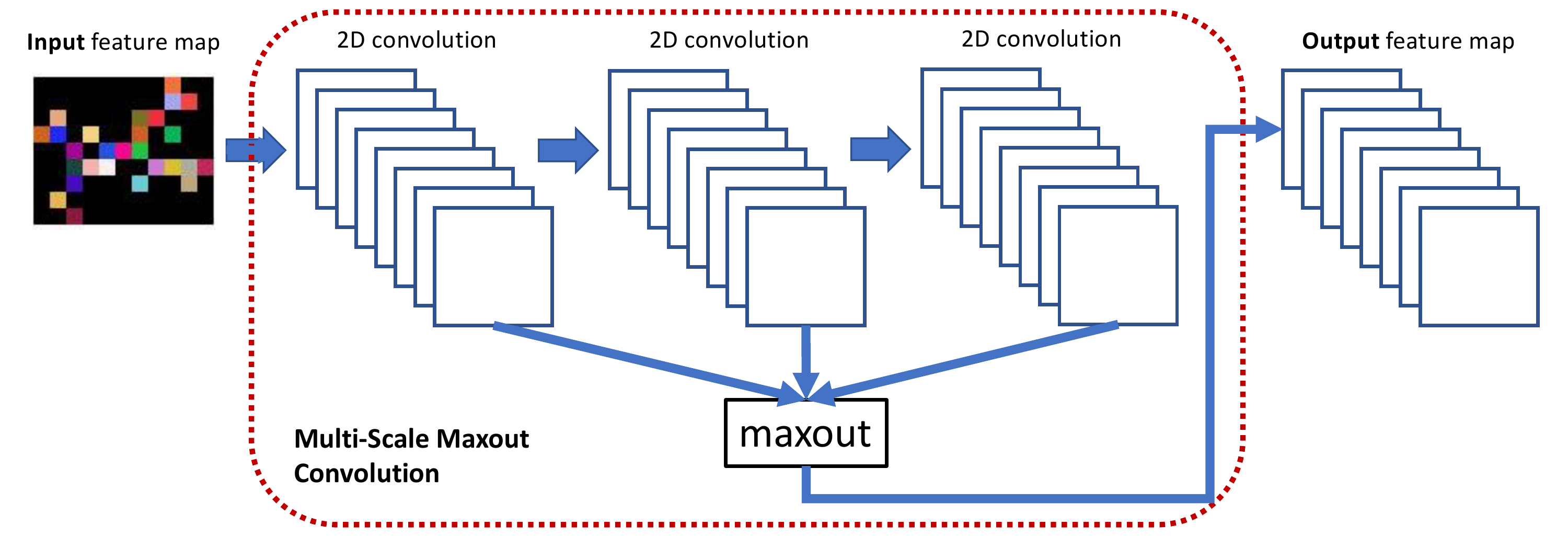}
	\caption{\footnotesize Multi-scale maxout convolution (MSM-Conv).
	}
	\label{fig:mm_cnn}
	\vspace{-3mm}
\end{figure}

\begin{table}[t]
	\caption{\footnotesize Statistics of benchmark data sets for graph classification.}
	\label{tab:statistics}
	\vspace{-2mm}
	\adjustbox{width=1.0\columnwidth}{
		\centering
		\begin{tabular}{c|ccccccc}
			\toprule
			Data Set & \begin{tabular}[c]{@{}c@{}}Num. of\\Graph\end{tabular} & \begin{tabular}[c]{@{}c@{}}Num. of\\Class\end{tabular} & \begin{tabular}[c]{@{}c@{}}Avg.\\Node\end{tabular} & \begin{tabular}[c]{@{}c@{}}Avg.\\Edge\end{tabular} & \begin{tabular}[c]{@{}c@{}}Avg.\\Degree\end{tabular} & \begin{tabular}[c]{@{}c@{}}Max\\Degree\end{tabular} & \begin{tabular}[c]{@{}c@{}}Feat.\\Dim.\end{tabular} \\
			\midrule
			MUTAG & 188 & 2 & 17.93 & 19.79 & 1.10 & 8 & 7 \\
			IMDB-B & 1000 & 2 & 19.77 & 96.53 & 4.88 & 270 & 136 \\
			IMDB-M & 1500 & 3 & 13.00 & 65.94 & 5.07 & 176 & 89 \\
			PROTEINS & 1113 & 2 & 39.06 & 72.82 & 1.86 & 50 & 3 \\
			\bottomrule
		\end{tabular}
	}
	\vspace{-3mm}
\end{table}

\subsubsection{Implementation}
By default, we set the parameters in Alg. \ref{alg:RKK_alg} as $\alpha=1.25, \lambda=1000$. 
We crop all the grid layouts to a fixed-size $32\times 32$ window. Also by default, for the MSM-CNNs we utilize three consecutive $3\times3$ kernels in the MSM-Conv, and design a simple network of ``MSM-Conv(64)$\rightarrow$max-pooling$\rightarrow$MSM-Conv(128)$\rightarrow$max-pooling$\rightarrow$MSM-Conv(256)$\rightarrow$global-pooling$\rightarrow$FC(256)$\rightarrow$FC(128)'' as hidden layers with ReLU activations, where FC denotes a fully connected layer and the numbers in the brackets denote the numbers of channels in each layer. We employ Adam \cite{kingma2014adam} as our optimizer, and set batch size, learning rate, and dropout ratio to 10, 0.0001, and 0.3, respectively.

\subsubsection{Ablation Study}

\bfsection{Effects of $\alpha, \lambda$ on Grid Layout and Classification}
To understand their effects on the 2D grid layout generation, we visualize some results in Fig. \ref{fig:alpha_lambda} using different combinations of $\alpha, \lambda$. We can see that:
\begin{itemize}
	\item From Fig. \ref{fig:alpha_lambda}(a)-(c), the diameters of grid layouts are $5\times5$, $6\times6$, $7\times7$ for $\alpha=1.00, 1.25, 1.50$, respectively. This strongly indicates that a smaller $\alpha$ tends to lead to a more compact layout at the risk of losing vertices.
	\item From Fig. \ref{fig:alpha_lambda}(c)-(e), similarly the diameters of grid layouts are $5\times5$, $6\times6$, $6\times6$ for $\lambda=200, 1000, 5000$, respectively. This indicates that a smaller $\lambda$ tends to lead to a more compact layout at the risk of losing vertices as well. In fact in Fig. \ref{fig:alpha_lambda}(d) node 1 and node 5 are merged together. When $\lambda$ is sufficiently large, the layout tends to be stable.
\end{itemize}
Such observations follow our intuition in designing Alg. \ref{alg:RKK_alg}, and occur across all the four benchmark data sets.

We also test the effects on classification performance. For instance, we generate 21x grid layouts using data augmentation on MUTAG, and list our results in Table \ref{tab:alpha_lambda}. Clearly our default setting achieves the best test accuracy. We also observe that in case $\lambda=0$ where $\mathcal{L}_{sep}=0$ and the KK algorithm works without vertex separation penalty, the classification accuracy is much worse than those with vertex separation penalty.

\begin{table}[h]
	\caption{\footnotesize Mean accuracy (\%) using different combinations of $\alpha, \lambda$.}
	\label{tab:alpha_lambda}
	\vspace{-3mm}
	\adjustbox{width=\columnwidth}{
		\centering
		\begin{tabular}{c|cccccc}
			\toprule
			Data Set &
			\begin{tabular}[c]{@{}c@{}}$\alpha=1.00$\\$\lambda=0$\end{tabular}& \begin{tabular}[c]{@{}c@{}}$\alpha=1.00$\\$\lambda=1000$\end{tabular} & \begin{tabular}[c]{@{}c@{}}$\alpha=1.50$\\$\lambda=1000$\end{tabular} & \begin{tabular}[c]{@{}c@{}}$\alpha=1.25$\\$\lambda=1000$\end{tabular} & \begin{tabular}[c]{@{}c@{}}$\alpha=1.25$\\$\lambda=200$\end{tabular} & \begin{tabular}[c]{@{}c@{}}$\alpha=1.25$\\$\lambda=5000$\end{tabular} \\
			\midrule
			\begin{tabular}[c]{@{}c@{}}MUTAG\\($21\times$)\end{tabular} & 80.51 & 85.14 & 83.04 & 86.31 & 85.26 & 85.26 \\
			\bottomrule
		\end{tabular}
	}
	\vspace{-2mm}
\end{table}

\bfsection{Vertex Loss, Graph Topology \& Misclassification}
To better understand the problem of vertex loss, we visualize some cases in Fig. \ref{fig:vertex_loss}. The reason for this behavior is due to the small distances among the vertices returned by Alg. \ref{alg:RKK_alg} that cannot survive from rounding. Unfortunately we do not observe a pattern on when such loss will happen. Note that our Alg. \ref{alg:RKK_alg} cannot avoid vertex loss with guarantee, and in fact the vertex loss ratio on each data set is very low, as shown in Table \ref{tab:vertex_loss_ratio}.

\begin{table}[t]
	\caption{\footnotesize Vertex loss ratio (\%) on each data set using different initialization methods. Vertex loss ratio is the percentage between the number of overlapped vertices and the total number of vertices on average of each graph in the dataset.}
	\label{tab:vertex_loss_ratio}
	\vspace{-2mm}
	\adjustbox{width=\columnwidth}{
		\centering

		{\begin{tabular}{c|cccc}
		
			\toprule
			Initialization & MUTAG & IMDB-B & IMDB-M & PROTEINS
			\\
			\midrule
			Circular & 1.06 & 0.99 & 0.40 & 0.90 \\
			Spectral & 10.09 & 9.30 & 12.50 & 18.99 \\
			Random & 1.88 & 1.48 & 1.04 & 1.07 \\
			
			\bottomrule
		\end{tabular}}
	}
	\vspace{-1mm}
\end{table}

\begin{table}[t]
	\caption{\footnotesize Ratios (\%) between vertex loss and misclassification.}
	\label{tab:vertex_loss_miscls}
	\vspace{-2mm}
	\adjustbox{width=\columnwidth}{
		\centering
		\begin{tabular}{c|ccc}
			\toprule
			Data Set & \begin{tabular}[c]{@{}c@{}}$\underline{|\mathcal{G}_{v.l.}|}$\\$|\mathcal{G}_{mis.}|$\end{tabular} & \begin{tabular}[c]{@{}c@{}}$\underline{|\mathcal{G}_{v.l.}\cap\mathcal{G}_{mis.}|}$\\$|\mathcal{G}_{v.l.}|$\end{tabular} & \begin{tabular}[c]{@{}c@{}}$\underline{|\mathcal{G}_{n.v.l.}\cap\mathcal{G}_{mis.}|}$\\$|\mathcal{G}_{n.v.l.}|$\end{tabular}
			\\
			\midrule
			MUTAG (21x) & 1.06 & 20.00 & 16.70 \\
			IMDB-B (3x) & 0.99 & 16.18 & 37.41 \\
			PROTEINS (3x) & 0.90 & 24.32 & 29.89 \\
			\bottomrule
		\end{tabular}
	}
	\vspace{-3mm}
\end{table}

Further we test the relationship between vertex loss and misclassification, and list our results in Table \ref{tab:vertex_loss_miscls} where $\mathcal{G}_{v.l.}$, $\mathcal{G}_{n.v.l.}$, and $\mathcal{G}_{mis.}$ denote the sets of graphs with vertex loss, no vertex loss, and misclassification, respectively, $\cap$ denotes the intersection of two sets, $|\cdot|$ denotes the cardinality of the set, and the numbers in the brackets denote the numbers of grid layouts per graph in data augmentation. From this table, we can deduce that vertex loss cannot be the key reason for misclassification, because it takes only tiny portion in misclassification and the ratios of misclassified graphs with/without vertex loss are very similar, indicating that misclassification more depends on the classifier rather than vertex loss.

As discussed before, a larger node degree is more difficult for preserving topology. In this test we would like to verify whether such topology loss introduces misclassification. Compared with the statistics in Table \ref{tab:statistics}, it seems that topology loss does cause trouble in classification. One of the reasons may be that the variance of the grid layout for a vertex with larger node degree will be higher due to perturbation. Designing better CNNs will be one of our future works to improve the performance.

\bfsection{CNN based Classifier Comparison}
We test the effectiveness of our GPGL algorithm on MUTAG dataset with PointNet \cite{li2018pointcnn}, PointNet with 2D max-pooling, VGG16 \cite{Simonyan15VGG}, ResNet50 \cite{he2016deep}, and our MSM-CNN. In those networks, PointNet is a point-based network that has no 2D convolution or pooling except for a global max-pooling at last; by applying an image representation, we add a 2D max-pooling after each point-wise convolution layer to introduce 2D integration to image pixels. VGG16, ResNet50 are widely used 2D CNNs for image classification. Our MSM-CNN is also a 2D CNN for image classification. In all tests we use the same augmented data. From Table \ref{tab:ablation_CNN} we observe that 2D CNNs are significantly better than the PointNet with large margins of 21.66\%, and PointNet improves 19.68\% in accuracy by simply adding 2D max-pooling layers. The observation clearly shows that 2D CNNs benefit from our GPGL algorithm.

\begin{table}[h]
	\caption{\footnotesize Mean accuracy (\%) using different CNN classifiers. In Network parameters, K denotes thousand and M denotes million.}
	\label{tab:ablation_CNN}
	\vspace{-2mm}
	\adjustbox{width=\columnwidth}{
		\centering
		\begin{tabular}{c|ccccc}
			\toprule
			Network & PointNet & \begin{tabular}[c]{@{}c@{}}PointNet with\\2D pooling\end{tabular} & VGG16 & ResNet50 & MSM
			\\
			\midrule
			MUTAG (101$\times$) & 66.55 & 86.23 & 88.21 & 91.02 & 94.18 \\
			\midrule
			Network Parameters & 86.0K & 86.0K & 23.7M & 32.6M & 1.2M \\			
			\bottomrule
		\end{tabular}
	}
	\vspace{-1mm}
\end{table}

\begin{figure}[t]
	\begin{minipage}[b]{0.495\linewidth}
		\begin{center}
			\centerline{\includegraphics[width=\columnwidth]{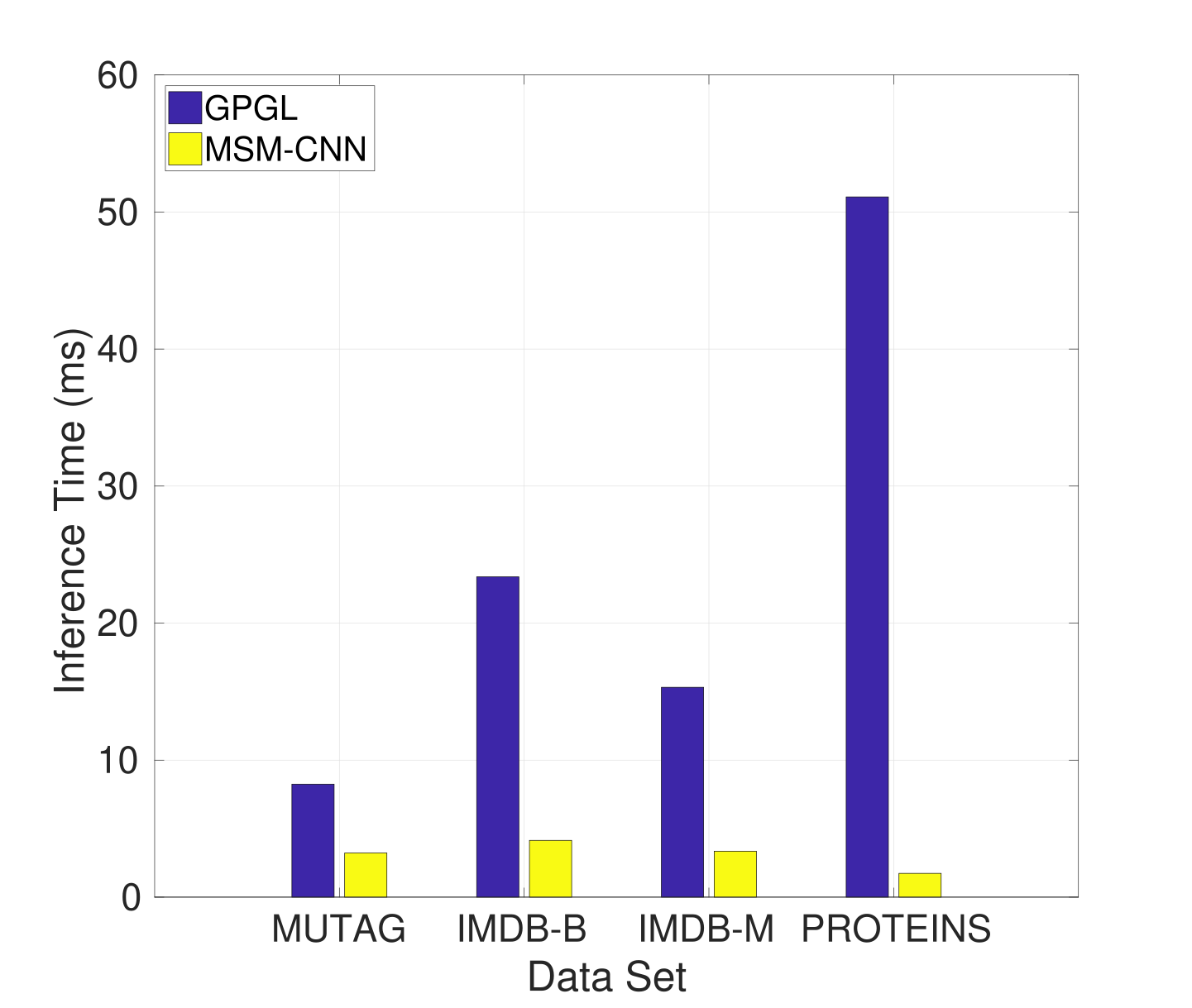}}
			\captionof{figure}{\footnotesize Running time at inference for GPGL and MSM-CNN.}
			\label{fig:running_time}
		\end{center}
		
	\end{minipage}
	\hfill
	\begin{minipage}[b]{0.495\linewidth}
		\begin{center}
			\centerline{\includegraphics[width=\columnwidth]{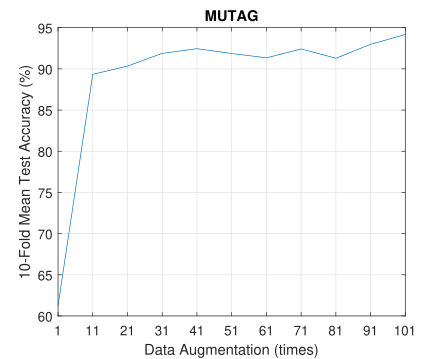}}
			\captionof{figure} {\footnotesize Illustration of data augmentation on classification.}
			\label{fig:augmentation}
		\end{center}
		
	\end{minipage}
	
	\vspace{-3mm}
\end{figure}

\bfsection{Effect of Data Augmentation using Grid Layouts on Classification}\label{subsec:MSM classification}
In order to train the deep classifiers well, the amount of training data is crucial. As shown in Alg.~\ref{alg:RKK_alg}, our method can easily generate tons of grid layouts that effectively capture different characteristics in the graph. Given the memory limit, we demonstrate the test performance for data augmentation in Fig.~\ref{fig:augmentation}, ranging from 1x to 101x with step 10x. As we see clearly, data augmentation can significantly boost the classification accuracy on MUTAG, and similar observations have been made for the other data sets.

\subsubsection{State-of-the-Art Comparison}
To do a fair comparison for graph classification, we follow the standard routine, \ie 10-fold cross-validation with random split. In the comparisons we compare our proposed method with existing works including graph convolution based methods and geometric deep learning based methods.
All the comparisons are summarized in Fig. \ref{fig:sota}. We generate 101x, 21x, 11x, 11x of the original size of MUTAG, IMDB-B, IMDB-M, and PROTEINS for data augmentation, and achieves $94.18\%\pm4.61\%, 74.90\%\pm4.01\%, 68.67\%\pm1.22\%, 79.52\%\pm1.72\%$ in terms of test accuracy, respectively. Among the tests, we achieve the state-of-the-art accuracy in IMDB-M and PROTEINS dataset. In MUTAG and IMDB-B datasets, Zhao \etal \cite{zhao2018work} achieves the state-of-the-art accuracy and ours is $0.82\%$ and $5.00\%$ behind.


\begin{figure*}[t]
	\begin{minipage}[b]{0.245\textwidth}
		\begin{center}
			\centerline{\includegraphics[width=1.05\columnwidth]{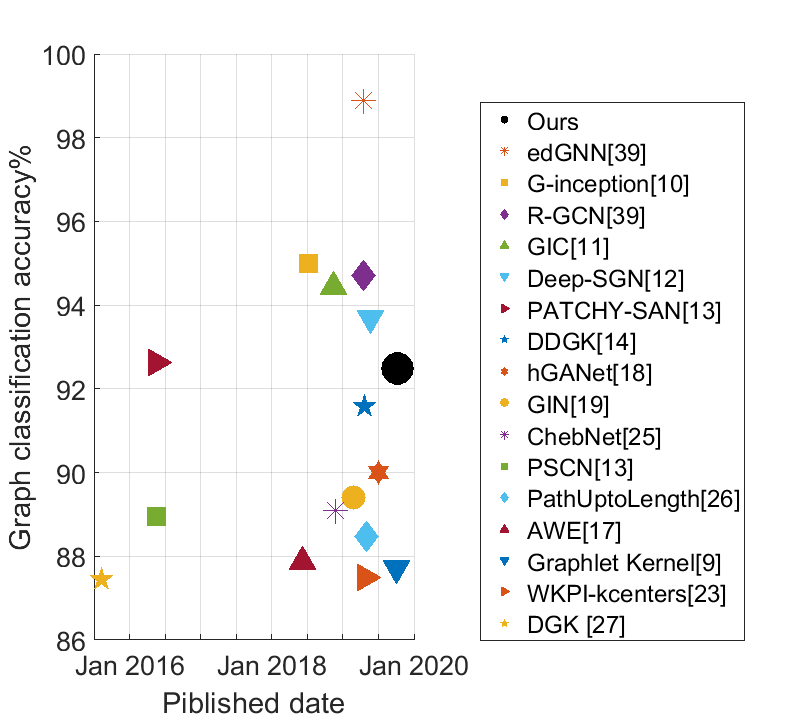}}
			\centerline{\footnotesize (a) MUTAG}
		\end{center}
	\end{minipage}
	\hfill
	\begin{minipage}[b]{0.245\textwidth}
		\begin{center}
			\centerline{\includegraphics[width=1.05\columnwidth]{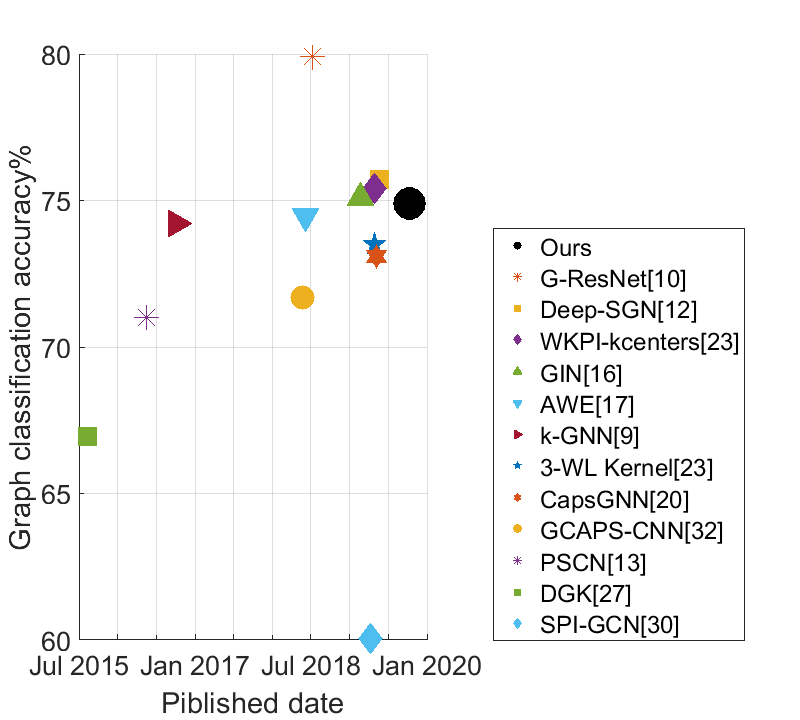}}
			\centerline{\footnotesize (b) IMDB-B}
		\end{center}
	\end{minipage}
	\hfill
	\begin{minipage}[b]{0.245\textwidth}
		\begin{center}
			\centerline{\includegraphics[width=1.05\columnwidth]{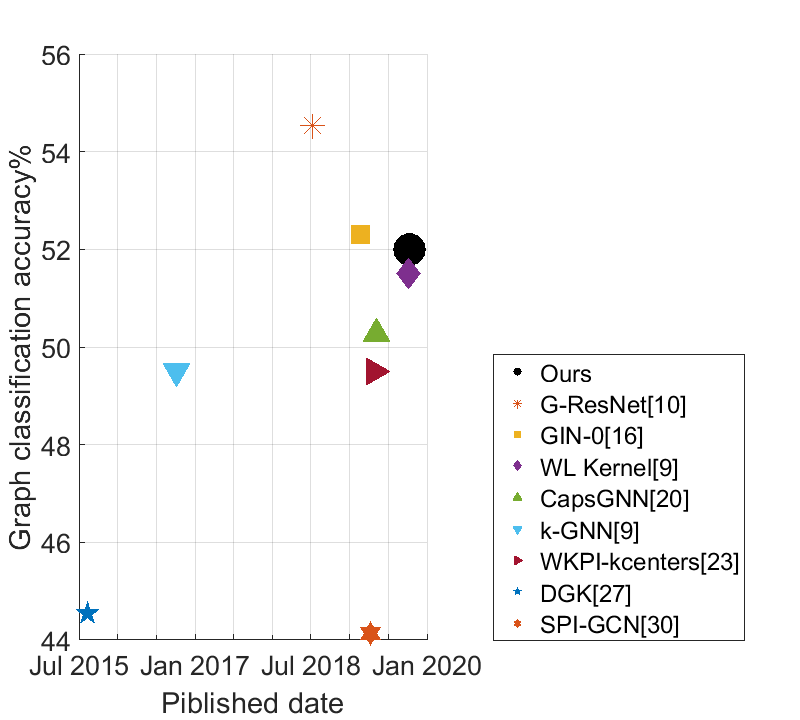}}
			\centerline{\footnotesize (c) IMDB-M}
		\end{center}
	\end{minipage}
	\hfill
	\begin{minipage}[b]{0.245\textwidth}
		\begin{center}
			\centerline{\includegraphics[width=1.05\columnwidth]{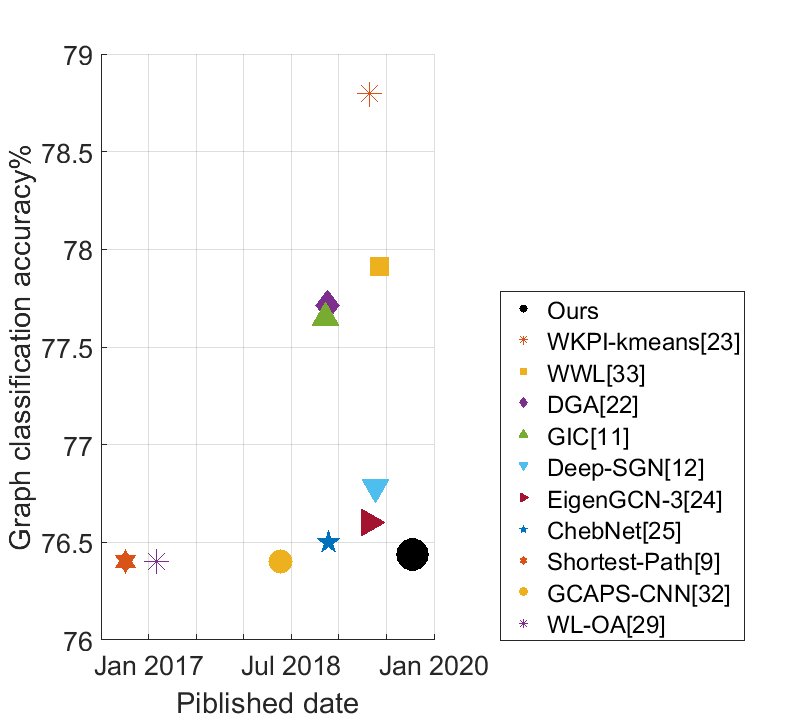}}
			\centerline{\footnotesize (d) PROTEINS}
		\end{center}
	\end{minipage}
	\hfill
	\caption{\footnotesize State-of-the-art result comparison. Numbers are cited from the leaderboard at \url{https://paperswithcode.com/task/graph-classification}}
	\label{fig:sota}
\end{figure*}

\begin{figure*}[t]
	\centering
	\includegraphics[width=.95\linewidth]{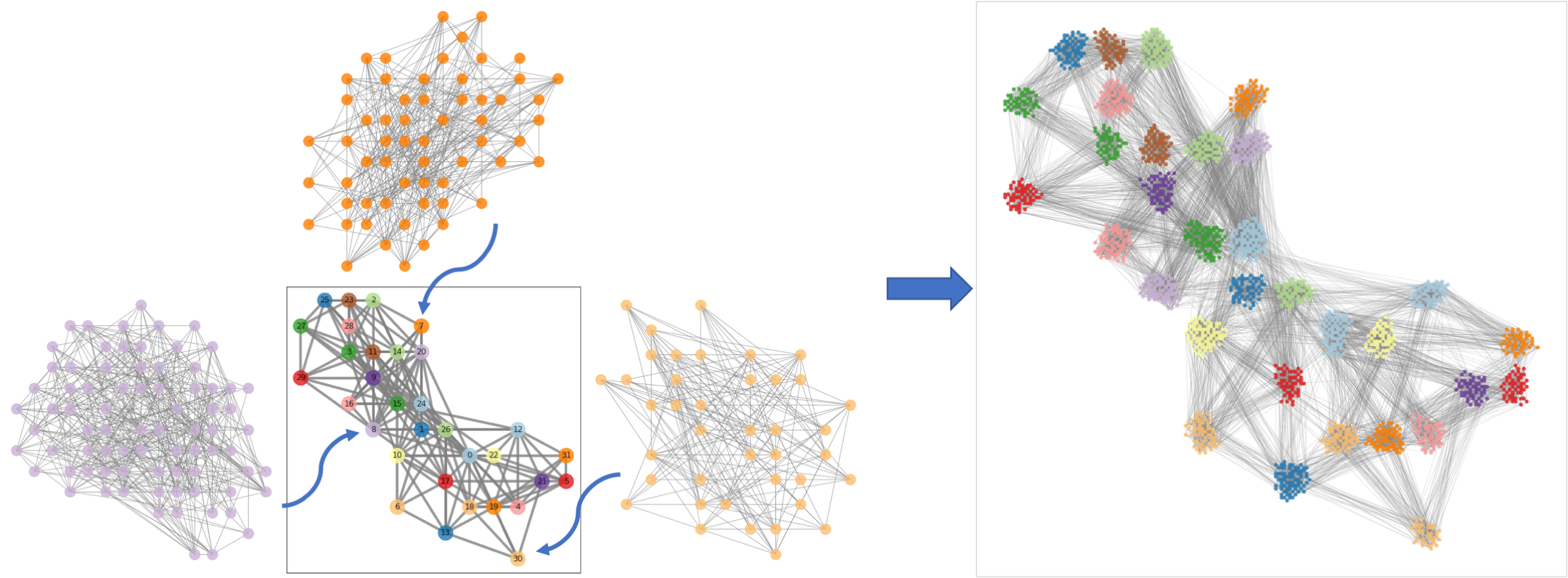}
	\caption{\footnotesize Illustration of two-level H-GPGL algorithm on a 2048-node graph, leading to a $256\times256$ grid layout.
	}
	\label{fig:H-GPGL_example}
\end{figure*}

\section{Hierarchical GPGL (H-GPGL)}
As we discussed before, the computational complexity of the KK algorithm is at least $O(|\mathcal{V}|^2)$ where $|\mathcal{V}|$ denotes the number of nodes in a graph. Also empirically from Fig. \ref{fig:running_time} we verify that even for small graphs the computational bottleneck of our method from GPGL (even this procedure can be offline) is significant that prevents ours from large-graph applications. For instance, the running time of GPGL on a simple graph (low node degree) with 2048 nodes takes about 90s. Therefore, in order to apply our method to large graphs we need to overcome this computational challenge. This motivates our hierarchical GPGL algorithm.


\begin{algorithm}[t]
	\SetAlgoLined
	\SetKwInOut{Input}{Input}\SetKwInOut{Output}{Output}
	\Input{number of nodes for GPGL $N$, undirected graph $\mathcal{G}=(\mathcal{V}, \mathcal{E})$, parent spatial location $\mathbf{a}_p$, parent grid size $\mathbf{s}_p$, child grid size $\mathbf{s}_c$, 2D grid layout $\mathcal{X}^*$}
	\Output{$\mathcal{X}^*$}
	\BlankLine
	
	$T\leftarrow round(\log_{N}{|\mathcal{V}|})$;
	
	\eIf{$T=1$}{
	$(\mathcal{X}, \mathcal{A}_c) \leftarrow \mbox{GPGL}(\mathcal{G}, \mathbf{a}_p, \mathbf{s}_c)$;
	
	$\mathcal{X}^*\leftarrow FitIntoGrid(\mathcal{X}^*, \mathcal{X}, \mathcal{A}_c)$;
	
	}{
	$\mathcal{H}\leftarrow partition(\mathcal{G}, N)$;
	
	Construct a connectivity graph $\mathcal{G}_H$ based on $\mathcal{H}$;
	
	$(\mathcal{X}_H, \mathcal{A}_H)\leftarrow \mbox{GPGL}(\mathcal{G}_H, \mathbf{a}_p, \mathbf{s}_p)$;
	
	\For{$i=1,\cdots,N$}{
	    $\mathcal{X}^*\leftarrow \mbox{H-GPGL}(N_i, \mathcal{H}_i, \mathcal{A}_{H_i}, \mathbf{s}_p, \mathbf{s}_c, \mathcal{X}^*)$;
	}
	}
	\Return $\mathcal{X}^*$;
	\caption{H-GPGL Algorithm}\label{alg:H-GPGL}
\end{algorithm}

\subsection{Algorithm Overview}

Our basic idea is to partition each graph into a set of subgraphs that can be organized in a hierarchy such as (complete) $m$-way tree where each node presents a subgraph and each level preserves the original graph connectivity among the subgraphs at the level. In this way we can preserve the original graph information as much as possible. To do so, an intuitive way is to partition a graph recursively, as listed in Alg. \ref{alg:H-GPGL}. We also illustrate an example of two-level GPGL generation procedure in Fig. \ref{fig:H-GPGL_example}. Here we first partition the graph with 2048 nodes into 32 subgraphs and map the connectivity graph of these subgraphs onto a $16\times16$ grid using the GPGL algorithm in Alg. \ref{alg:RKK_alg}. Then we project each subgraph again onto a $16\times16$ grid, leading to a $256\times256$ grid layout shown on the right side.  

\bfsection{Notations}
We define $N\in\mathbb{R}$ (\resp $N_i, \forall i$) as the number of nodes that GPGL can easily deal with. In the hierarchy, $\mathbf{a}_p\in\mathbb{R}^2$ denotes the 2D spatial location of a parent node in the grid $\mathcal{X}^*\subseteq\mathbb{R}^2$ and $\mathcal{A}_c\subseteq\mathbb{R}^2$ denotes the set of 2D spatial locations of the child nodes of the parent node. $\mathcal{H}$ denotes the set of partitioned subgraphs satisfying $\bigcup\mathcal{H}=\mathcal{G}$, and $\mathcal{X}_H\subseteq\mathbb{R}^2, \mathcal{A}_H\subseteq\mathbb{R}^2$ denote the grid layout and the spatial location set for subgraph connectivity graph $\mathcal{G}_H$ of $\mathcal{H}$. $\mathcal{H}_i\subseteq\mathbb{R}^2, \mathcal{A}_{H_i}\subseteq\mathbb{R}^2, \forall i$ denote the $i$-th subgraph in $\mathcal{H}$ and its spatial location, respectively. Note that there is one more output of $GPGL$ in Alg. \ref{alg:H-GPGL} than Alg. \ref{alg:RKK_alg}, which is the 2D location set for the input nodes. This operation can be easily implemented by adding an extra traversal on the grid. 

\bfsection{Fitting a Graph Node into the Grid Layout}
Let $\mathcal{P}_v$ denote the path from a leaf $v$ (\ie a graph node) to the root in the tree hierarchy through $M$ internal nodes. Then the 2D location of $v$, $\mathbf{a}_v$, on the grid layout $\mathcal{X}^*$ can be easily computed as follows:
\begin{align}
    \mathbf{a}_v = \tilde{\mathbf{a}}_{\mathcal{P}_v^{(0)}} +  \mathbf{s}_c\sum_{m=1}^M\tilde{\mathbf{a}}_{\mathcal{P}_v^{(m)}}\times\left(\mathbf{s}_p\right)^m,
\end{align}
where $\tilde{\mathbf{a}}_{\mathcal{P}_v^{(m)}}, \forall m$ denotes the relative location of the child node in the parent grid layout at the $m$-th level, and all the operators here are entry-wise. Note that in Alg. \ref{alg:H-GPGL} such localization calculation is done in a recursive way for the function $FitIntoGrid$. 

We illustrate this fitting procedure on the right side of Fig. \ref{fig:H-GPGL_example}.

\bfsection{Constructing Undirected Connectivity Graph of Subgraphs}
To build the connectivity graph $\mathcal{G}_H$ in Alg. \ref{alg:H-GPGL}, we first take each subgraph as one node in $\mathcal{G}_H$. Next we verify in the original graph whether there exists any edge that connects two subgraphs. If so, we add an edge between the two nodes in $\mathcal{G}_H$, otherwise, no edge. 


We illustrate this fitting procedure on the right side of Fig. \ref{fig:H-GPGL_example}. Firstly, we separated the 2048 points from the input point cloud into 32 clusters, and applied the GPGL algorithm to the cluster centers to create the high level layout. As shown in Fig. \ref{fig:H-GPGL_example}, the clusters labeled from 0 to 31 are projected to a $16\times16$ grid cell. For each cluster \eg cluster 8, we apply the GPGL algorithm to the cluster nodes and generate a low level image representation with size $16\times16$, which is shown as the purple graph in the figure. Finally, we embedded the low level image into the cell in the high level grid, \eg embedded the low level image of cluster 8 to the cell of cluster 8 in the high level grid. The grid cell without any cluster will be embedded with an empty $16\times16$ image. In that way we convert the 2048-node point cloud to a $256\times256$ image representation as illustrated on the right of Fig. \ref{fig:H-GPGL_example}.

\subsection{Normalized Cuts \cite{shi2000normalized} for Graph Partitioning}
Improving the computational efficiency of GPGL is our concern in developing H-GPGL, where the $partition$ function is the key. Recall that the computational complexity of GPGL is (at least) proportional to the square of the number of nodes in the graph. Now given an undirected graph $\mathcal{G}=(\mathcal{V}, \mathcal{E})$, the partition function should generate a set of $J$ subgraphs $\{\mathcal{G}_j=(\mathcal{V}_j, \mathcal{E}_j)\}_{j=1,\cdots,J}$ that minimize the following problem:
\begin{align}
    \min_{\{\mathcal{G}_j\}} \sum_{j=1}^J\||\mathcal{V}_i|\|^2, \, s.t. \, \bigcup\mathcal{V}_j = \mathcal{V}.
\end{align}
The solution of the optimization problem above suggests that ideally the sizes of subgraphs should be equal.

\setlength{\columnsep}{15pt}
\begin{wrapfigure}{r}{.45\linewidth}
	\vspace{-15pt}
	\begin{center}
		\includegraphics[width=\linewidth]{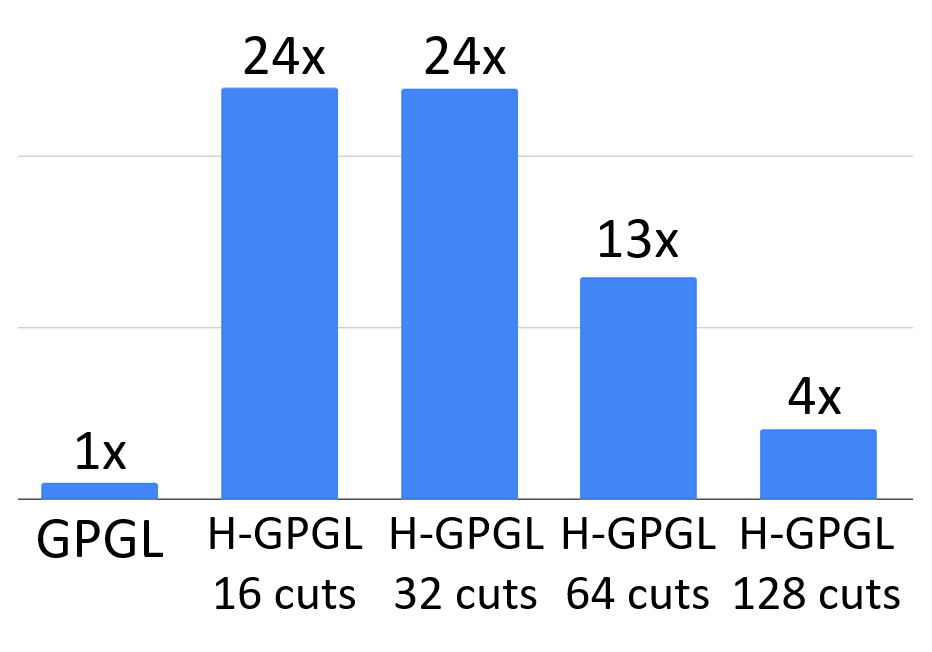}
		\vspace{-7mm}
		\caption{\footnotesize Running speed comparison on a 2048-node graph with two-level H-GPGL. See our experimental section later for more details.}
		\label{fig:H-GPGL-Speed}
	\end{center}
	\vspace{-15pt}
\end{wrapfigure}
This motivates us to employ the normalized graph cut algorithm as our partition function, because the algorithm aims to produce the subgraphs (given the number of subgraphs) with minimum cost as well as approximately equal sizes. Locating normalized cuts need to solve a generalized eigenvalue problem, whose computational complexity is essentially super-linear to the size of adjacency matrix of the graph \cite{vasudevan2017hierarchical}. Theoretically this complexity is no better than that of GPGL. Empirically, however, we do find that using normalized cut our H-GPGL performs much faster than GPGL alone. Fig. \ref{fig:H-GPGL-Speed} illustrates our running speed comparison result. With a proper number of cuts like 16 or 32 in our case, H-GPGL can process the graph within 5.5s, while GPGL needs about 90s. With more cuts, the running time of normalized cutting is longer and starts to dominate that of H-GPGL, leading to slower running speed.

\begin{table}[t]
	\caption{\footnotesize Running time (ms) comparison on a 2048 point cloud with 5-NN and Delaunay triangulation, followed by our H-GPGL.}
	\label{tab:Shapenet_running_speed}
	\vspace{-2mm}
	\adjustbox{width=\columnwidth}{
		\centering
		\begin{tabular}{c|ccc|c}
		    \toprule
          \begin{tabular}[c]{@{}c@{}c@{}}Method \end{tabular}
          & \begin{tabular}[c]{@{}c@{}c@{}}Graph\\construction \end{tabular}
          & \begin{tabular}[c]{@{}c@{}c@{}}Normalized\\cut \end{tabular}
          & \begin{tabular}[c]{@{}c@{}}GPGL \end{tabular}
          & \begin{tabular}[c]{@{}c@{}}Total \end{tabular} \\
          \midrule 
          5-NN & 74.02 & 2036.37 & 3621.70 & 5732.09 \\
          Delaunay & 1918.77 & 1703.07 & 1792.52 & 5414.36\\
			\bottomrule
		\end{tabular}
	}
\end{table}

\subsection{Experiment: Point Cloud Semantic Segmentation}
Consider a point cloud $\mathcal{P}\subseteq\mathbb{R}^3$ that contains a set of 3D points scanned from a single object. Each point $\mathbf{p}\in\mathcal{P}$ contains three geometry attributes $x,y,z$ representing its location in the 3D space, and a part label $c\in\mathcal{C}$ that the point $\mathbf{p}$ belongs to. Fig. \ref{fig:ShapeNet}(a) shows a point cloud with 2048 points scanned from a skateboard. The problem of point cloud semantic segmentation is defined as predicting the part labels for all the 3D points in a point cloud.

In this section we will show how to apply our H-GPGL algorithm to the point cloud semantic segmentation problem. To do so, we first construct a graph for each point cloud, then map each graph onto a grid using H-GPGL, and finally utilize a variant of multi-scale maxout CNN to conduct the segmentation.

\bfsection{Data Set}
For demonstration we use the ShapeNet \cite{yi2016scalable} part segmentation benchmark, which contains 16881 object samples and each object sample has 2048 point scans associated to one of the 50 part categories. The benchmark is split into an 14007-object training set and a 2874-object testing set.

\subsubsection{Graph Construction from Point Cloud}
Graph construction from point cloud is the first step in graph-based approaches for point cloud applications. The graph in the literature is usually generated by connecting the $K$ nearest neighbors (K-NN). However, the graph generation by K-NN often comes with difficulties in selecting a suitable $K$. On one hand, when $K$ is too small, the points are intended to form small subgraphs (clusters) with no guarantee of connectivity among the subgraphs. This makes graph-convolution fail to pass through all graph nodes. On the other hand, when $K$ is large, points are densely connected, leading to high processing load using graph kernels and large noise in local feature extraction.

In contrast to the K-NN based graph construction, our work employs the Delaunay triangulation \cite{delaunay1934sphere}, a widely-used triangulation method in computational geometry, to create graphs based on the positions of points. The triangulation graph has three advantages: (1) The connection of all the nodes in the graph is guaranteed, which makes graph-based network feasible on all constructed graphs from the point clouds; (2) All the local nodes are directly connected, which helps the local feature extraction; (3) The total number of graph connections is relatively low comparing to the graphs built by K-NN with a large $K$. Delaunay triangulation also returns better graphs that leads to better segmentation results (mcIoU \& miIoU: 83.8\%, 85.7\%) than 5-NN (mcIoU \& miIoU: 82.5\%, 84.3\%) using our MSM-CNN.

The worst-case computational complexity of Delaunay triangulation is well-known to be $O(n^{\lceil\frac{d}{2}\rceil})$ \cite{amenta2007complexity}, which in the 3D space is $O(n^2)$. Table \ref{tab:Shapenet_running_speed} lists the running time of graph construction algorithms followed by each key component in H-GPGL. Although Delaunay triangulation indeed needs significantly longer time, the overall running time is essentially better than 5-NN, because of better generated graphs.

\subsubsection{Implementation Details}
\bfsection{H-GPGL}
The 2048 points in each object sample is first mapped to a graph using Delaunay triangulation. Then we conduct two-level H-GPGL to generate 32 subgraphs using normalized cut at the parent level whose connectivity graph is mapped to a $16\times16$ grid. Each subgraph is mapped to a $16\times16$ grid as well, leading to a $256\times256$ grid layout for each point cloud. Finally we generate 14007 training samples and 2874 test samples for further usage. Note that for this task we do not use data augmentation, due to the computational bottleneck.

\bfsection{CNN for Segmentation}
In the experiment, we implement the following neural networks: (1) PointNet, (2) PointNet with 2D max-pooling and upsampling, (3) U-Net with VGG16 backbone, (4) U-Net with ResNet50 backbone, and (5) U-Net with MSM backbone. The architecture of PointNet with 2D max-pooing and upsampling is presented in Fig \ref{fig:Segmentation_PointNet_Pooling_Architecture} The U-Net based networks are implemented using the segmentation-models package \cite{Yakubovskiy2019sm}. At the end of the network, a mask filter is applied to label the void pixels as ignored category, which do not participate in loss calculation and gradient backpropagation. 

We train the network for 30 epochs using Adam \cite{kingma2014adam} with learning rate 0.0001 and batch-size 1. The training takes 15 hours for each neural network model on an NVidia 2080Ti GPU machine. We use two common metrics to evaluate the accuracy, mean instance intersection over union (miIoU) and mean class intersection over union (mcIoU). 

\begin{figure}[t]
	\centering
	\includegraphics[width=.9\linewidth]{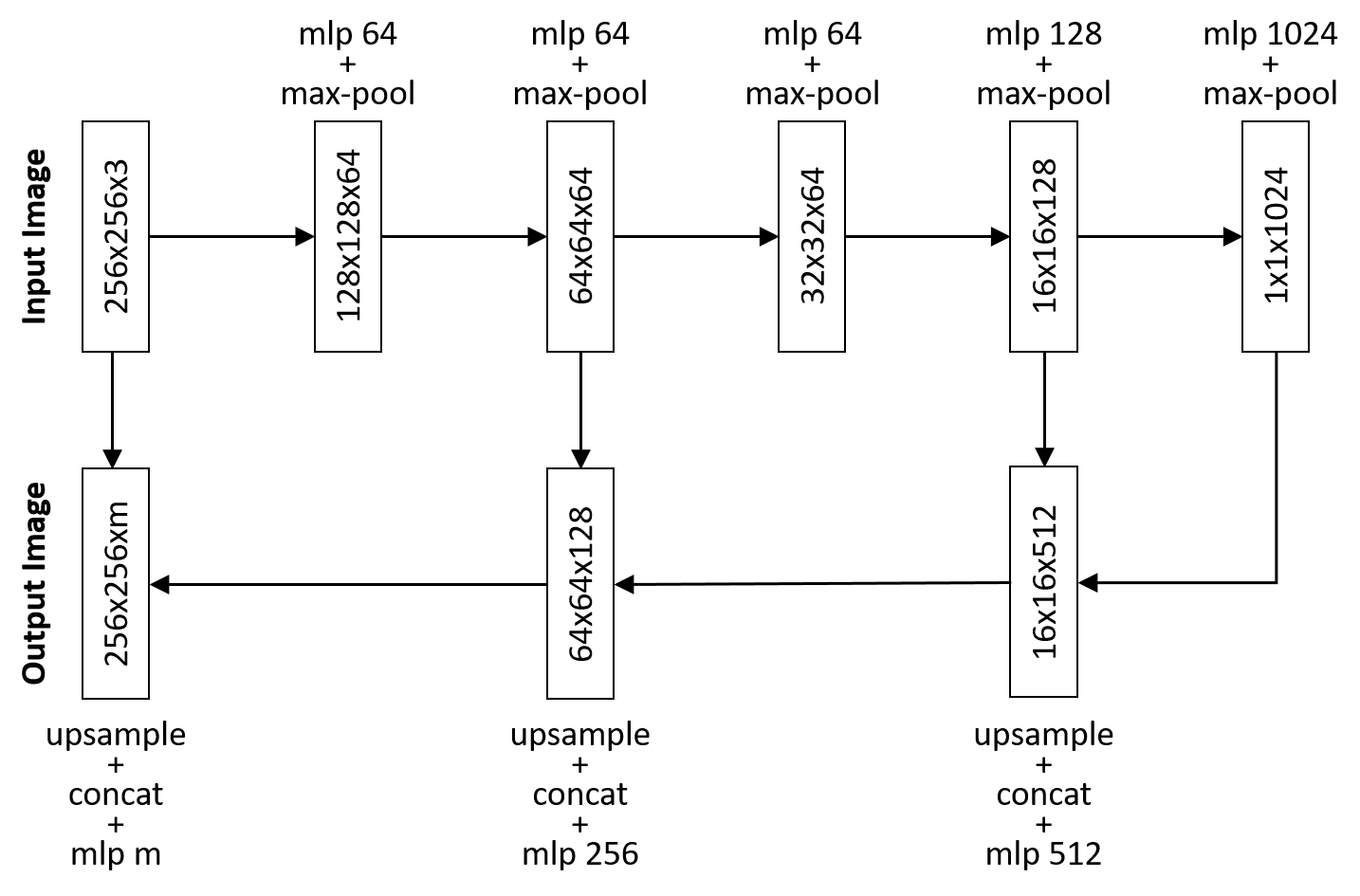}
	\caption{\footnotesize Architecture of PointNet with 2D max-pooling and upsampling for point cloud segmentation.}
	\label{fig:Segmentation_PointNet_Pooling_Architecture}
	\vspace{-3mm}
\end{figure}

\bfsection{Segmentation on overlapped points}
The H-GPGL on a point cloud may lead to part of the points overlapped in the same grid cell. Once overlapped points occur, we take an average, by default, of all the points and assign it to the grid cell. After network inference, the grid cell is labeled with a predicted semantic category, which is then assigned to all the overlapped points associated to that cell. In that way, we fuse the features of overlapped points and distribute the grid cell labels to those points.


\begin{figure}[t]
	\begin{minipage}[b]{0.15\textwidth}
		\begin{center}
			\centerline{\includegraphics[width=\columnwidth]{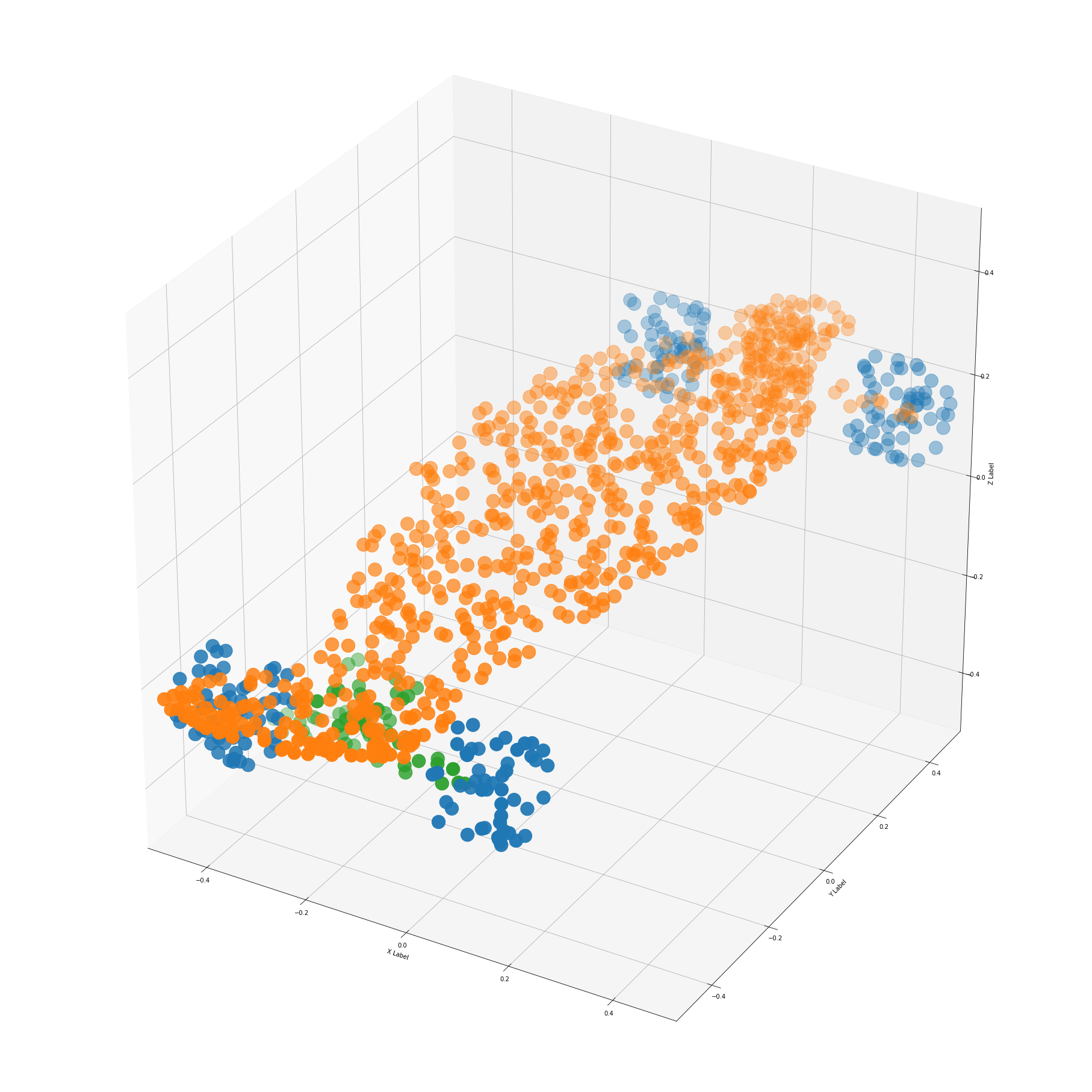}}
			\centerline{\footnotesize (a) Ground Truth}
		\end{center}
	\end{minipage}
	\hfill
	\begin{minipage}[b]{0.15\textwidth}
		\begin{center}
			\centerline{\includegraphics[width=\columnwidth]{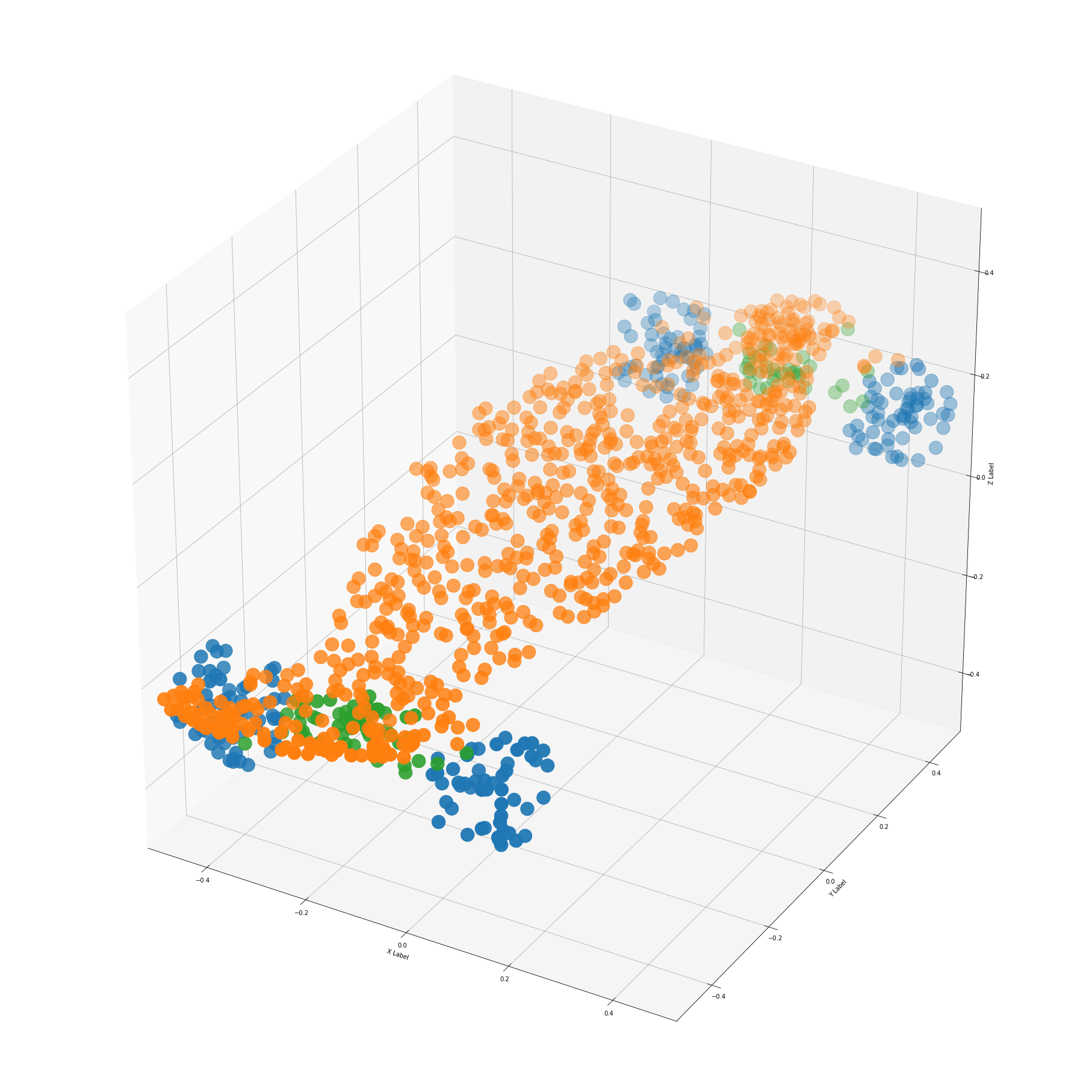}}
			\centerline{\footnotesize (b) Ours}
		\end{center}
	\end{minipage}
	\hfill
	\begin{minipage}[b]{0.15\textwidth}
		\begin{center}
			\centerline{\includegraphics[width=\columnwidth]{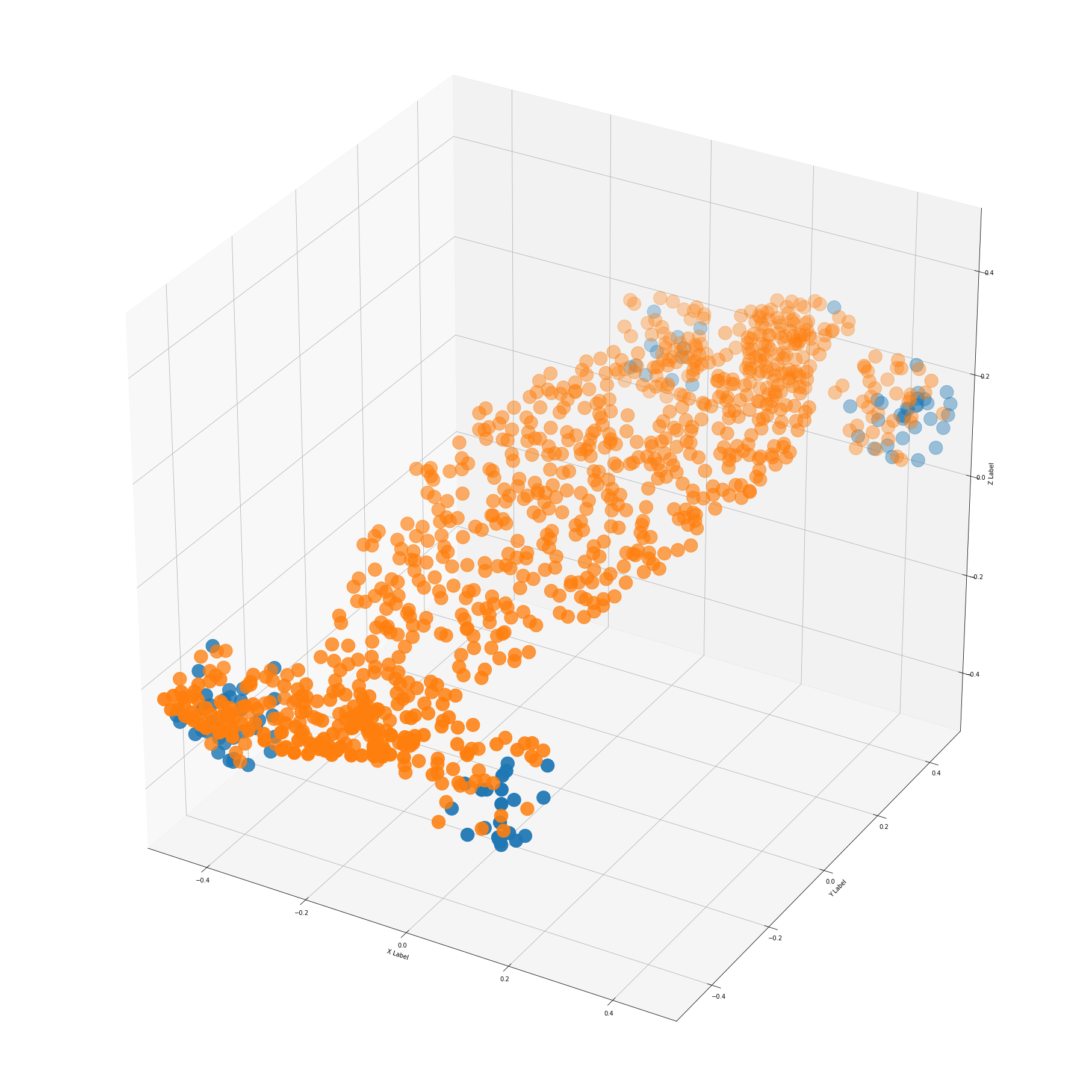}}
			\centerline{\footnotesize (c) PointNet}
		\end{center}
	\end{minipage}
	\hfill
	\vspace{-3mm}
	\caption{\footnotesize Visualization comparison of skateboard part segmentation results from {\bf(a)} ground truth, {\bf (b)} our H-GPGL, and {\bf (c)} PointNet. }
	\label{fig:ShapeNet}
	\vspace{-3mm}
\end{figure}


\begin{figure}[t]
	\centering
	\includegraphics[width=.95\linewidth]{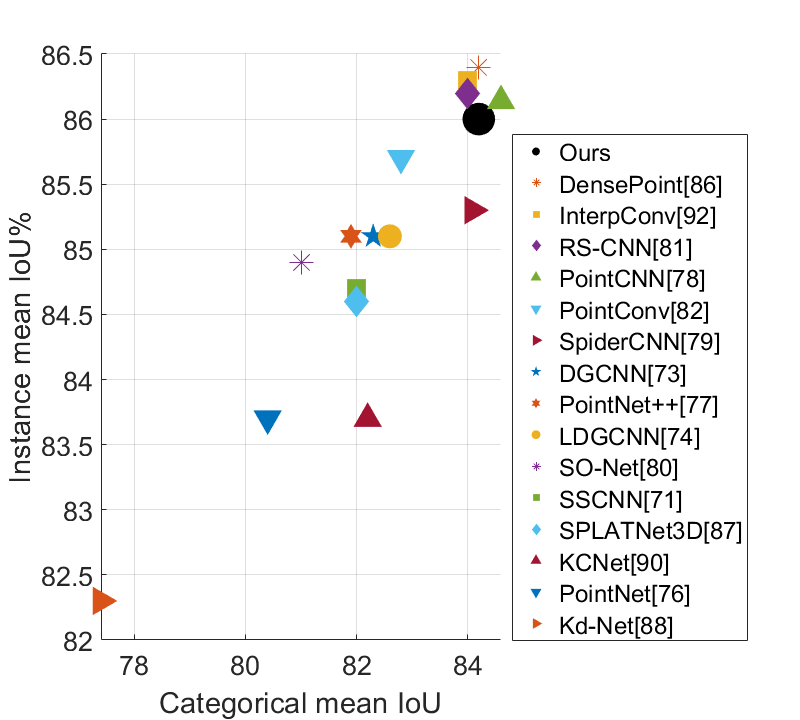}
	\caption{\footnotesize State-of-the-art segmentation result comparison on ShapeNet.}
	\label{fig:Result_Comparison_ShapNet}
	\vspace{-3mm}
\end{figure}

\subsubsection{Results}
Table \ref{tab:Result_Seg} presents the mcIoU and miIoU of all the neural network models on ShapeNet point cloud segmentation task. From the table we observe that the PointNet model achieves similar performance as in \cite{qi2017pointnet}, while PointNet with 2D max-pooling and upsampling gains an improved performance with a large margin of 1.9\% on mcIoU and 1.9\% on miIoU. We also observe that all U-Net based models achieve an improved performance against the PointNet model, which indicates that 2D CNNs benefit from the GPGL image representations in point cloud feature learning.

\begin{table}[h]
	\caption{\footnotesize Results of ShapeNet point cloud segmentation using different CNN models. The mcIoU denotes mean class intersection over union, and the miIoU denotes mean instance intersection over union. }
	\label{tab:Result_Seg}
	\vspace{-2mm}
	\adjustbox{width=\columnwidth}{
		\centering
		\begin{tabular}{c|ccccc}
			\toprule
			Network & PointNet & \begin{tabular}[c]{@{}c@{}}PointNet with\\2D pooling\end{tabular} & VGG16 & ResNet50 & MSM
			\\
			\midrule
			mcIoU & 80.1 & 82.0 & 82.6 & 83.1 & 84.2 \\
			\midrule
			miIoU & 80.4 & 82.3 & 82.8 & 84.0 & 86.0 \\
			\bottomrule
		\end{tabular}
	}
	\vspace{-1mm}
\end{table}

\begin{table*}[t]
	\caption{\footnotesize Result comparison on ShapeNet part segmentation IoU (\%) of each object class.}
	\label{tab:Shapenet_Result_per_class}
	\vspace{-2mm}
	\adjustbox{width=\textwidth}{
		\centering
		\begin{tabular}{c|cccccccccccccccc|c}
		    \toprule
            & air plane & bag & cap & car
            & chair & ear phone & guitar & knife
            & lamp  & laptop & motor bike & mug
            & pistol & rocket & skateboard & table & ave. \\
            \midrule 
            PointNet \cite{qi2017pointnet}
            & 83.4 & 78.7 & 82.5 & 74.9
            & 89.6 & 73.0 & 91.5 & 85.9
            & 80.8 & 95.3 & 65.2 & 93.0
            & 81.2 & 57.9 & 72.8 & 80.6 & 80.4\\
            
            Pointnet++ \cite{qi2017pointnet++}
            & 82.4 & 79.0 & 87.7 & 77.3
            & 90.8 & 71.8 & 91.0 & 85.9
            & 83.7 & 95.3 & 71.6 & 94.1
            & 81.3 & 58.7 & 76.4 & 82.6 & 81.9\\
            
            DGCNN \cite{wang2019dynamic}
            & \bf{84.2} & 83.7 & 84.4 & 77.1
            & 90.9 & 78.5 & 91.5 & 87.3
            & 82.9 & \bf{96.0} & 67.8 & 93.3
            & 82.6 & 59.7 & 75.5 & 82.0 & 82.3\\
            
            RS-CNN \cite{liu2019relation}
            & 83.5 & \bf{84.8} & 88.8 & \bf{79.6}
            & \bf{91.2} & 81.1 & 91.6 & 88.4
            & 86.0 & \bf{96.0} & 73.7 & 94.1
            & 83.4 & 60.5 & 77.7 & 83.6 & 84.0\\
            
            DensePoint \cite{liu2019densepoint}
            & 84.0 & 85.4 & \bf{90.0} & 79.2
            & 91.1 & \bf{81.6} & 91.5 & 87.5
            & 84.7 & 95.9 & \bf{74.3} & \bf{94.6}
            & 82.9 & 64.6 & 76.8 & 83.7 & \bf{84.2}\\
            \midrule
            {\bf Ours}
            & 83.3 & 83.1 & 89.4 & 75.9
            & 87.8 & 80.5 & \bf{91.7} & \bf{90.9}
            & \bf{87.2} & \bf{96.0} & 66.6 & 92.6
            & \bf{86.3} & \bf{69.4} & \bf{81.1} & \bf{86.0} & \bf{84.2}\\
			\bottomrule
		\end{tabular}
	}
	\vspace{-3mm}
\end{table*}

We visualize our segmentation result in Fig. \ref{fig:ShapeNet} comparing with the ground truth as well as the one using PointNet. As we see, our result is much closer to the ground truth, especially on the top-left wheel and the axis between the two bottom-wheels. Although we have some wrong predictions among the points between the two top-wheels, we think that these mistakes are acceptable because the shape is symmetric that makes the prediction harder to differentiate the top and bottom parts.

We then summarize the state-of-the-art segmentation results on ShapeNet in Fig. \ref{fig:Result_Comparison_ShapNet}. Clearly our results using MSM-CNN, both mcIoU and miIoU, are comparable to the literature. Further we list the ShapeNet part segmentation IoU of each object class in Table \ref{tab:Shapenet_Result_per_class} for more detail comparison, and ours are the best among 8 of the 16 classes.

\bfsection{Impact of overlapped points}
To understand the impact of overlapped points in point cloud segmentation, we analyze the segmentation accuracy of all points and overlapped points on the ShapeNet test set. Among the 2874 test samples with 5,885,952 points, 17616 points are overlapped to the same grid cell, which covers 0.299\% of the points. In the test, 62.7\% of the overlapped points are mislabeled, which leads to a 0.22\% decrease in mcIoU and 0.21\% in miIoU. This observation shows that the MSM-CNN have difficulty in segmenting the overlapped points, which needs further study to overcome. However, since the overlapped points take only a tiny portion of the point cloud, they have a minor impact on the accuracy of point cloud segmentation.

\section{Conclusion}
In this paper we answer the question positively that CNNs can be used directly for graph applications by projecting graphs on grids properly. To this end, we propose a novel graph drawing problem, namely graph-preserving grid layout (GPGL), which is an integer programming to learn 2D grid layouts by minimizing topology loss. We propose a penalized Kamada-Kawai algorithm to solve the integer programming and a multi-scale maxout CNN to work with GPGL. We manage to demonstrate the success of GPGL on small graph classification. To improve the computational efficiency of GPGL, we propose hierarchical GPGL (H-GPGL) that utilizes graph partitioning algorithms such as normalized cut to generate subgraphs which GPGL is applied to. We demonstrate that H-GPGL is much more suitable than GPGL to large graph applications such as 3D point cloud semantic segmentation, where we achieve the state-of-the-art on ShapeNet part segmentation benchmark. As future work we are interested in applying this method to real-world problems such as LiDAR data processing.

\section*{Acknowledgment}
This work was supported in part by the Mitsubishi Electric Research Laboratories (MERL) and NSF CCF-2006738. 


\ifCLASSOPTIONcaptionsoff
  \newpage
\fi


\bibliographystyle{IEEEtran}
\bibliography{egbib}

\vskip -5pt plus -1fil

\begin{IEEEbiography}[{\includegraphics[width=1in,height=1.25in,clip,keepaspectratio]{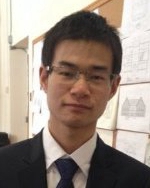}}]{Yecheng Lyu}
(S'17) received his B.S. degree from Wuhan University, China in 2012 and M.S. degree from Worcester Polytechnic Institute, USA in 2015 where he is currently a Ph.D student working on autonomous vehicles. His current research interest is point cloud processing and deep learning.
\end{IEEEbiography}

\vskip -5pt plus -1fil

\begin{IEEEbiography}[{\includegraphics[width=1in,height=1.25in,clip,keepaspectratio]{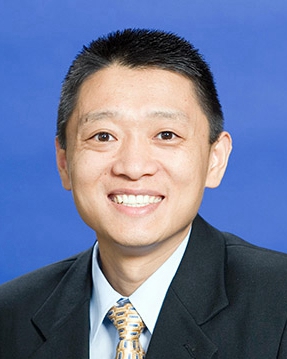}}]{Xinming Huang}
(M’01–SM’09) received the Ph.D. degree in electrical engineering from Virginia Tech in 2001. Since 2006, he has been a faculty in the Department of Electrical and Computer Engineering at Worcester Polytechnic Institute (WPI), where he is currently a chair professor. Previously he was a Member of Technical Staffs with the Bell Labs of Lucent Technologies. His main research interests are in the areas of circuits and systems, with emphasis on autonomous vehicles, deep learning, IoT and wireless communications. 
\end{IEEEbiography}

\vskip -5pt plus -1fil

\begin{IEEEbiography}[{\includegraphics[width=1in,height=1.25in,clip,keepaspectratio]{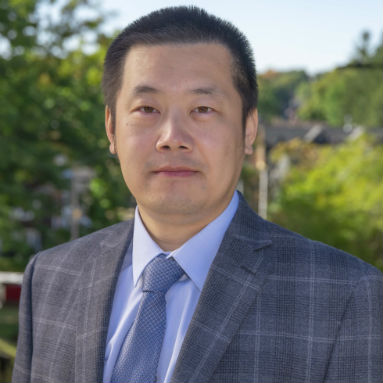}}]{Ziming Zhang}
is an assistant professor at Worcester Polytechnic Institute (WPI). Before joining WPI he was a research scientist at Mitsubishi Electric Research Laboratories (MERL) in 2017-2019. Prior to that, he was a research assistant professor at Boston University in 2016-2017. Dr. Zhang received his PhD in 2013 from Oxford Brookes University, UK, under the supervision of Prof. Philip H. S. Torr. His research areas include object recognition and detection, zero-shot learning, deep learning, optimization, large-scale information retrieval, visual surveillance, and medical imaging analysis. His works have appeared in TPAMI, IJCV, CVPR, ICCV, ECCV, ACM Multimedia, ICDM, ICLR and NIPS. He won the R\&D 100 Award 2018.
\end{IEEEbiography}

\end{document}